\documentclass{article} 
\usepackage{iclr2027_conference,times}

\usepackage{amsmath,amsfonts,bm}

\def\eqref#1{equation~\ref{#1}}

\def\1{\bm{1}}

\DeclareMathAlphabet{\mathsfit}{\encodingdefault}{\sfdefault}{m}{sl}
\SetMathAlphabet{\mathsfit}{bold}{\encodingdefault}{\sfdefault}{bx}{n}

\usepackage{hyperref}
\usepackage{url}

\usepackage{amsmath}
\usepackage{amssymb}
\usepackage{amsthm}
\usepackage{bbm}
\usepackage{algorithm}
\usepackage{algpseudocode}
\usepackage{xcolor}
\usepackage{mathtools}
\usepackage{booktabs}
\usepackage{multirow}
\usepackage{makecell}
\usepackage{graphicx}
\usepackage{caption}
\usepackage{subcaption}
\usepackage{wrapfig}
\usepackage{thmtools}

\newcommand{\always}{\mathsf{G}\,}
\newcommand{\event}{\mathsf{F}\,}
\newcommand{\until}{\;\mathsf{U}\;}

\title{
SpecMAGIC: Specification-Guided Multi-Task Generalization in Multi-Agent Reinforcement Learning
}

\iclrfinalcopy

\author{
    Zijian Guo$^1$, 
    \.Ilker I\c{s}{\i}k$^2$,
    H. M. Sabbir Ahmad$^1$, Wenchao Li$^{1, 2}$ \\
    $^1$Division of Systems Engineering, Boston University \\
    $^2$Department of Electrical and Computer Engineering, Boston University \\
    \texttt{\{zjguo}, \texttt{iilker}, \texttt{sabbir92}, \texttt{wenchao\}@bu.edu}
}

\begin{document}

\maketitle
\pagestyle{fancy}
\fancyhf{}
\fancyhead[L]{Preprint}
\fancyfoot[C]{\thepage}
\renewcommand{\headrulewidth}{0.4pt}

\begin{abstract}
    Multi-task generalization is an important yet underexplored problem in multi-agent reinforcement learning (MARL). 
    Many real-world applications require multi-agent systems to generalize across complex, temporally extended tasks with diverse inter-agent coordination requirements. 
    We introduce a specification language that combines temporal-logic task specifications with operators for expressing such requirements. 
    We propose SpecMAGIC, a hierarchical framework that leverages the automaton structure of task specifications for subtask decomposition. 
    A centralized high-level planner identifies and allocates compatible subtask sequences across agents, while decentralized low-level policies execute individual subtasks. 
    This enables zero-shot generalization to new tasks by composing reusable subtask policies according to new specifications, without retraining. 
    We establish soundness and probabilistic completeness of the high-level planner. Experiments on multi-agent navigation tasks show that SpecMAGIC substantially outperforms existing approaches on unseen task specifications and maintains strong performance as the number of agents increases.
    The code is available at \href{https://github.com/BU-DEPEND-Lab/SpecMAGIC}{https://github.com/BU-DEPEND-Lab/SpecMAGIC}.
\end{abstract}

\section{Introduction}
\label{sec:intro}

Multi-task multi-agent reinforcement learning (MTMARL) aims to learn multi-agent policies that generalize across diverse tasks~\citep{omidshafiei2017deep, chai2025survey}.
Recent methods have explored this setting in applications including multi-robot systems~\citep{wang2023multi, li2024multi, zhu2025multi} and autonomous driving~\citep{berahman2025multi, wu2026risk}. 
However, zero-shot generalization to complex, temporally extended tasks with diverse inter-agent coordination requirements, as arise in many real-world applications, remains largely underexplored. 
To capture this setting, we consider multi-agent systems where each agent is assigned individual tasks, while the team must additionally satisfy coordination requirements such as dependency and count constraints.
Moreover, at test time, both the individual tasks and the coordination structure among agents can differ from those encountered during training.
For example, warehouse robots that have learned to deliver packages need to follow different user-specified delivery sequences, where some deliveries must precede others and certain deliveries require a specified number of robots.

Temporal logic provides a principled framework for specifying multi-agent behaviors over time. 
In particular, linear temporal logic (LTL) has been widely used to express complex temporally extended tasks and to guide the learning of policies that satisfy them~\citep{leon2020extended, liu2025guiding, yalcinkaya2025automata}. 
Recent approaches exploit the structure of automata corresponding to LTL specifications~\citep{vardi1986automata} to decompose them into subtasks or subtask sequences~\citep{vaezipoor2021ltl2action, qiu2023instructing, jackermeier2025deepltl}, enabling generalization to unseen specifications by composing previously learned subtask policies according to the structure of a new specification.
This paradigm can, in principle, be extended to multi-agent specifications by combining individual tasks and inter-agent coordination requirements into a global specification and applying the same decomposition procedure. 
However, the automaton corresponding to a global specification can be exponential in the size of the specification, and composing agent-specific requirements can further induce a combinatorial blow-up in the joint state space. 
Consequently, explicit automaton construction and planning become increasingly intractable as the number of agents and specification complexity grow.

In this work, we propose SpecMAGIC, a hierarchical framework for zero-shot \textbf{Spec}ification-guided \textbf{M}ulti-\textbf{A}gent task \textbf{G}eneralization with \textbf{I}nter-agent \textbf{C}oordination requirements.
We introduce specification operators for dependency and count constraints that can be translated into standard LTL formulas. allowing individual tasks and coordination requirements to be represented within a unified specification framework.
To avoid constructing a global automaton over all agents and coordination requirements, SpecMAGIC decomposes the specifications into \textit{reach-avoid} subtask sequences and plans directly over the resulting subtask space. 
A centralized sampling-based high-level planner then identifies, allocates and sequences compatible
subtasks across agents.
The resulting plan is executed by decentralized subtask-conditioned RL policies, 
compatible with a variety of goal-conditioned RL policies with subtask embeddings~\citep{vaezipoor2021ltl2action, qiu2023instructing, jackermeier2025deepltl, guo2026one}.
This hierarchical approach enables learned subtask policies to be recombined at test time to satisfy unseen individual tasks and coordination constraints, including both finite- and infinite-horizon specifications. 
Our main contributions are summarized as follows:
\vspace{-1.5mm}

\begin{itemize}
    \item We study zero-shot generalization to complex multi-agent LTL specifications that combine individual tasks with inter-agent coordination requirements, and introduce new specification operators to conveniently express these coordination constraints.
    \item We develop a scalable task-allocation approach that avoids the exponential complexity associated with explicit global product-automaton construction, and establish soundness and probabilistic completeness of the proposed planner.
    \item We conduct comprehensive experiments on diverse multi-agent task specifications with varying numbers of agents in a navigation environment, demonstrating consistent improvements over state-of-the-art baselines in zero-shot task generalization and scalability.
\end{itemize}

\vspace{-1mm}
\section{Related Work}
\label{sec:related-work}
\vspace{-1mm}

\textbf{Specification-driven RL.}
There has been growing interest in learning RL policies that satisfy tasks expressed through formal specifications.
Early work primarily focuses on a single fixed specification~\citep{hasanbeig2018LogicallyConstrained, shao2023Sample, shah2024LTLConstrained}.
More recently, zero-shot generalization to unseen specifications has been studied through direct specification or automaton encoding~\citep{vaezipoor2021ltl2action, yalcinkaya2024compositional} and through subtask decomposition~\citep{qiu2023instructing, liu2024Skill, jackermeier2025deepltl, guo2026one}.
In multi-agent settings, formal specifications have been used to coordinate multiple agents toward a global/system specification through hand-designed~\citep{neary2020reward}, heuristic~\citep{smith2023automatic}, or learned~\citep{shah2025learning} decompositions.
Recent work extends formal specifications to MTMARL over a \textit{given} collection of tasks~\citep{hammond2021multi, liu2025guiding}.
We instead focus on zero-shot generalization to \textit{unseen} specifications; the closest work, ACC-MARL~\citep{yalcinkaya2025automata}, also considers unseen specifications but is limited to finite-horizon single-agent tasks, whereas we consider both finite- and infinite-horizon specifications with explicit inter-agent coordination requirements.

\textbf{Multi-agent task decomposition and allocation.} 
In multi-agent systems with temporal-logic specifications, task decomposition and allocation are widely used to synthesize coordinated plans, 
ranging from independently assigned local specifications to global tasks with temporal dependencies, collaborative execution, and capability constraints~\citep{guo2015multi, liu2024fast, li2025task, luo2025simultaneous}.
These methods primarily address planning for a given specification and typically assume known system dynamics.
To improve planning efficiency, 
sampling-based approaches incrementally explore the product-automaton state space rather than exhaustively searching  it~\citep{kantaros2018sampling, kantaros2020stylus}.
Hierarchical methods have also been developed for task decomposition and allocation in MTMARL~\citep{wang2021rode, iqbal2022alma, li2024hierarchical}, but do not consider temporal-logic specifications.
In contrast, our high-level planner samples directly over reach-avoid subtask sequences derived from the component specifications,
avoiding construction of the global product automaton, while decentralized policies learn to execute reusable subtasks under unknown dynamics.

\vspace{-1mm}
\section{Preliminaries}
\label{sec:preliminaries}
\vspace{-1mm}

\textbf{Markov Games.} In this work, we consider partially observable Markov games (POMGs)~\citep{littman1994markov}. A POMG with $N$ agents is defined as the tuple $\mathcal{M} := (\mathcal{S}, \mathcal{A}, P, \gamma, d_0)$, where 
$\mathcal{S} = \mathcal{S}_1 \times \cdots \times \mathcal{S}_N$ denotes the joint state space, 
$\mathcal{A} = \mathcal{A}_1 \times \cdots \times \mathcal{A}_N$ denotes the joint action space, 
$\mathcal{O} = \mathcal{O}_1 \times \cdots \times \mathcal{O}_N$ denotes the joint observation space.
The transition dynamics are given by $P:\mathcal{S}\times \mathcal{A} \times \mathcal{S} \mapsto [0,1]$, 
$\gamma \in [0,1)$ is the discount factor, 
and $d_0 \in \Delta(\mathcal{S})$ is the initial state distribution. 
Each agent $i$ receives its local observation correlated with the state $o_i: \mathcal{S}\mapsto \mathcal{O}_i$ and selects an action according to a policy $\pi_i:  \mathcal{O}_i \times \mathcal{A}_i \mapsto [0, 1]$. The joint policy is denoted by $\pi := (\pi_1, \pi_2, \dots, \pi_N)$. 
Unlike standard formulations, we do not explicitly define a reward function as part of the POMG since the reward is induced by the task specification provided at test time.

\textbf{Linear Temporal Logic.}
LTL~\citep{pnueli1977temporal} provides a formalism for expressing temporal properties over system behaviors.
Given a finite set of atomic propositions $AP$, an LTL formula is recursively constructed using Boolean operators negation $\neg$, conjunction $\land$, and disjunction $\lor$, together with the temporal operators ``until'' ($\mathsf{U}$), ``eventually'' ($\mathsf{F}$), and ``always'' ($\mathsf{G}$):
\begin{equation*}
    \varphi := a \mid \neg \varphi \mid \phi \land \varphi \mid \phi \lor \varphi \mid \event \varphi \mid \always \varphi \mid \phi \until \varphi
\end{equation*}
Intuitively, the formula $\varphi_1 \until \varphi_2$ holds if $\varphi_1$ is satisfied at all time steps prior to the first occurrence of $\varphi_2$. 
The operator $\event \varphi$ holds if $\varphi$ is satisfied at some future time, while $\always \varphi$ holds if $\varphi$ is satisfied from the current step onward. 
For each agent $i$, let $AP_i$ denote its set of atomic propositions, and let $AP=\bigcup_{i=1}^N AP_i$ denote the overall proposition set.
Each agent has access to a known labeling function $L_i:\mathcal S_i\mapsto 2^{AP_i}$ that maps its local state to the set of atomic propositions.
We write $L(s)=\bigcup_{i=1}^N L_i(s_i)$ for the propositions satisfied at a joint state $s=(s_1,\ldots,s_N)$.
For a trajectory $\tau=s_0s_1\ldots$, its trace is given by $\operatorname{Tr}(\tau)=L(s_0)L(s_1)\ldots$, and $\tau\models\varphi$ iff this trace satisfies $\varphi$.
Accordingly, the probability of satisfying $\varphi$ under policy $\pi$ is
$\Pr(\pi\models\varphi)=\mathbb E_{\tau\sim\pi}\!\left[\mathbbm{1}[\tau\models\varphi]\right]$.
We are interested in learning policies that maximize this satisfaction probability for arbitrary $\varphi$.

\textbf{Büchi automata and subtasks.} 
We represent an LTL specification $\varphi$ using a (non-deterministic) Büchi automaton~\citep{buchi1966symposium} $\mathcal B_\varphi=(\mathcal Q,\Sigma,\delta,\mathcal F,q_0)$, where $\mathcal Q$ is a finite set of automaton states, $\Sigma=2^{AP}$ is the alphabet, $\delta\subseteq\mathcal Q\times\Sigma\times\mathcal Q$ is the transition relation, $\mathcal F\subseteq\mathcal Q$ is the set of accepting states, and $q_0$ is the initial state.
For an infinite word $\omega=\alpha_0\alpha_1\ldots\in\Sigma^\omega$, a run $r=q_0q_1\ldots$ satisfies $(q_i,\alpha_i,q_{i+1})\in\delta$ for every $i\geq0$ and is accepting iff it visits an accepting state infinitely often.
To keep track of progress toward specification satisfaction, \textit{reach-avoid subtasks}~\citep{jackermeier2025deepltl}, denoted by $\sigma=(\alpha^+,\alpha^-)$, are proposed to represent transitions over the automaton.
The reach part $\alpha^+\in\Sigma$ corresponds to a transition $(q,\alpha^+,q')\in\delta$, where $q$ and $q'$ are consecutive states along an accepting run and $q'\neq q$ when $q\notin\mathcal F$.
The avoid part $\alpha^-\in\Sigma$ corresponds to a transition $(q,\alpha^-,q')\in\delta$, where $q$ and $q'$ are not consecutive states along any accepting run.
Intuitively, the reach part specifies progress toward satisfying the specification, whereas the avoid part captures transitions that can lead to a violation.

\section{Method}
\label{sec:method}

In this section, we present our hierarchical framework, with an overview shown in Figure~\ref{fig:overview}.
We first state the generalization problem considered in this work (Section~\ref{sec:problem-statement}) and then introduce the different classes of task specifications (Section~\ref{sec:specifications}).
Next, we present the centralized sampling-based high-level planner for task allocation over reach-avoid subtask sequences and establish its soundness and probabilistic completeness (Section~\ref{sec:task-decomposition-allocation}).
Finally, we present the decentralized low-level subtask-conditioned policies (Section~\ref{sec:policy-learning}).

\subsection{Problem Statement}
\label{sec:problem-statement}
We consider multi-agent tasks in which each agent is assigned an \textit{individual-agent} specification, together with additional \textit{inter-agent dependency} and \textit{count} requirements.
A global specification combines all these specification components $\varphi_{\texttt{global}}:=\bigwedge_{i\in\mathcal I}\varphi_i$, where $I:=\{1,\ldots,N\}$ is the agent index set, $\mathcal I:=I\cup\{\texttt{dep},\texttt{cnt}\}$, $\varphi_i\sim\Phi_{\texttt{ind}}$ for $i\in I$, $\varphi_{\texttt{dep}}\sim\Phi_{\texttt{dep}}$, and $\varphi_{\texttt{cnt}}\sim\Phi_{\texttt{cnt}}$, 
where $\Phi_{\texttt{ind}}$, $\Phi_{\texttt{dep}}$, and $\Phi_{\texttt{cnt}}$ denote arbitrary distributions over these three types of specifications.
Let $\Phi_{\texttt{global}}$ denote the distribution over such global specifications.
Our goal is to learn a multi-agent policy $\pi$ that maximizes the probability of satisfying specifications drawn from $\Phi_{\texttt{global}}$
\begin{equation*}
\pi^\star
= \underset{\pi}{\arg\max} \
\mathbb{E}_{\varphi_{\texttt{global}} \sim \Phi_{\texttt{global}}, \tau \sim \pi}
\left[\mathbbm{1}[\tau \models \varphi_{\texttt{global}}]\right].
\end{equation*}

\begin{figure}[ht]
    \centering
    \includegraphics[width=0.95\linewidth,trim={130 1450 1930 1250},clip]{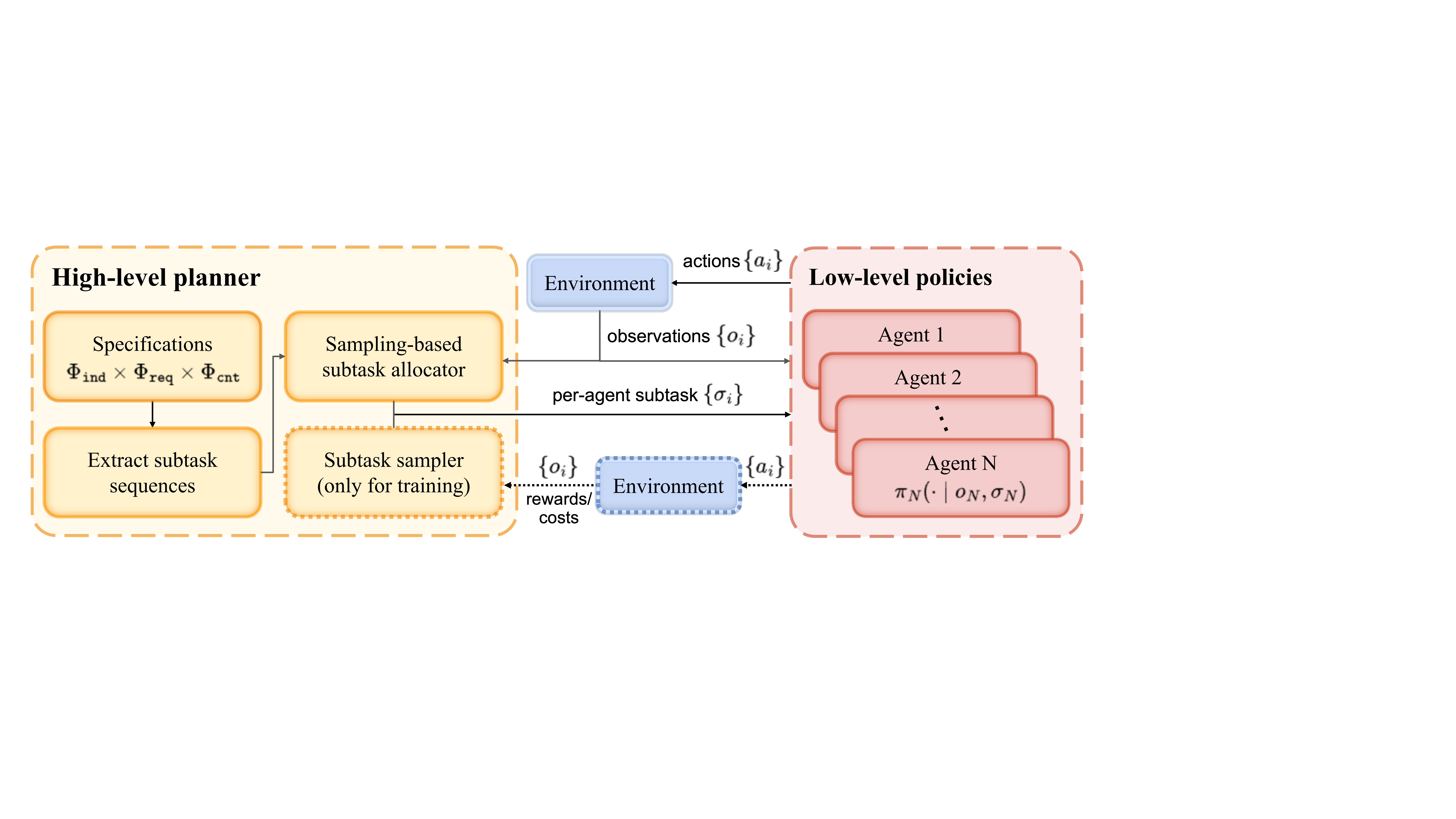}
    
    \vspace{-2mm}
    \caption{
    Overview of the proposed hierarchical framework. 
    Specifications are decomposed into reach-avoid subtasks, from which a centralized sampling-based planner constructs a feasible subtask sequence for execution by decentralized low-level policies.
    Solid and dotted blocks and arrows indicate components used at test time and during training, respectively.
    }
    \label{fig:overview}
\end{figure}

\subsection{Specification Language}
\label{sec:specifications}

We now introduce the specification language for the components of $\varphi_{\texttt{global}}$.
We use standard LTL to express individual-agent specifications, allowing each $\varphi_i$, $i\in I$, to be an arbitrary LTL formula over the agent-specific atomic propositions $AP_i$.
To express inter-agent coordination, we augment these temporal-logic task specifications with operators for dependency and count requirements.

\textbf{Dependency requirements.}
To capture dependency relations among agents, we define the operators $\texttt{Req}$ and $\texttt{EachReq}$, and specify dependency requirements by the following syntax:
\begin{equation*}
    \varphi_{\texttt{dep}}::=(\alpha \ \texttt{Req} \ \beta) \mid (\alpha \ \texttt{EachReq} \ \beta) \mid \phi_{\texttt{dep}} \land \varphi_{\texttt{dep}} \mid \phi_{\texttt{dep}} \lor \varphi_{\texttt{dep}}
\end{equation*}
where $\alpha$ and $\beta$ are Boolean formulas over atomic propositions.
We say $\tau \models \alpha \ \texttt{Req} \ \beta$ iff there exists a time $t \geq 0$ such that $\tau, t \models \beta$ and $\forall t' \in [0, t)$, $\tau, t' \not\models \alpha$ and 
$\tau \models \alpha \ \texttt{EachReq} \ \beta$ iff for every $t \geq 0$ such that $\tau,t \models \alpha$ and either $t=0$ or $\tau,t-1 \not\models \alpha$, there exists $t' \in [0,t)$ such that $\tau,t' \models \beta$ and $\tau,t'' \not\models \alpha$ for all $t'' \in [t',t)$.
Intuitively, $\texttt{Req}$ requires the prerequisite $\beta$ before the first occurrence of $\alpha$, whereas $\texttt{EachReq}$ requires $\beta$ before every non-consecutive occurrence of $\alpha$.
In contrast, the negation of a dependency relation, e.g., $\neg (\alpha \  \texttt{Req} \ \beta)$ or $\neg (\alpha \  \texttt{EachReq} \ \beta)$, does not correspond to a meaningful dependency requirement and is therefore excluded from the syntax.

\textbf{Count requirements.}
We also introduce operators $\texttt{Sync}$ and $\texttt{Async}$ to express count requirements among agents, such as how many agents must satisfy a given condition.
The corresponding specifications are defined by the following syntax:
\begin{equation*}
    \varphi_{\texttt{cnt}} ::= 
    \texttt{Sync}[I, \alpha, m] \mid 
    \texttt{Async}[I, \alpha, m] \mid \neg \varphi_{\texttt{cnt}} \mid \phi_{\texttt{cnt}} \land \varphi_{\texttt{cnt}} \mid \phi_{\texttt{cnt}} \lor \varphi_{\texttt{cnt}}
\end{equation*}
where $\alpha$ is a Boolean formulas over atomic propositions, $I$ is the set of agents $I \subseteq [N]$, and $1 \leq m \leq |I|$.
Satisfaction of $\varphi$ by $\tau$ is defined as 
$\tau \models \texttt{Sync}[I, \alpha, m] \Leftrightarrow \exists t: | \{i \in I \mid L_i(s_{i, t}) \models \alpha\} | \geq m $ and 
$\tau \models \texttt{Async}[I, \alpha, m] \Leftrightarrow | \{i \in I \mid \exists t: L_i(s_{i, t}) \models \alpha\} | \geq m $.
Intuitively, $\texttt{Sync}$ requires at least $m$ agents to satisfy $\alpha$ at the same (discrete) timestep, and $\texttt{Async}$ only requires at least $m$ agents to satisfy $\alpha$, possibly at different timesteps.

\textbf{Connections to real-world applications.}
These operators provide a convenient way to express coordination requirements among agents that naturally arise in a wide range of multi-agent applications.
For example, in surveillance and security, a guard robot may need to inspect an area before another robot enters, which can be expressed using $\texttt{Req}$ or $\texttt{EachReq}$, while $\texttt{Async}$ can specify that a certain number of areas are visited for monitoring coverage.
In warehouse automation, $\texttt{Req}$ or $\texttt{EachReq}$ can specify that inventory availability must be confirmed before items are retrieved, while $\neg\texttt{Async}$ can prevent a certain number of robots from retrieving out-of-stock items.
In multi-robot manufacturing, $\texttt{Req}$ or $\texttt{EachReq}$ can require an object to be stabilized before manipulation, while $\neg\texttt{Sync}$ can limit the number of robots operating on the same object simultaneously.
These operators also serve as a convenient specification interface and can be translated into standard LTL formulas, allowing standard automata-theoretic constructions and operations for LTL to be directly applied; the corresponding translations are provided in Appendix~\ref{app:specs}.

\subsection{Task Decomposition and Allocation}
\label{sec:task-decomposition-allocation}

To obtain a feasible multi-agent task plan, a straightforward approach is to construct the automaton of the global specification $\varphi_{\texttt{global}}$, or equivalently the product automaton of its component specifications, and search for feasible subtask sequences.
However, the resulting state space grows exponentially with the number of agents and specification complexity~\citep{kantaros2018sampling, kantaros2020stylus, abate2026semantically}, making explicit construction and search intractable at scale.
Previous works~\citep{qiu2023instructing, jackermeier2025deepltl, guo2026one} have shown that leveraging equivalent subtask sequences derived from specifications can improve learning efficiency and generalization.
Motivated by this, we develop a sampling-based task-allocation method that operates on the reach-avoid subtask sequences of the component specifications, identifies compatible combinations of subtasks across components while respecting their coordination requirements, and incrementally constructs a feasible multi-agent plan without explicitly constructing the product automaton.
Under unknown dynamics and partial observability, we focus on finding feasible solutions and establish the soundness and probabilistic completeness of the planner.

\begin{figure*}[ht]
\centering
\vspace{-4mm}
\begin{minipage}[t]{0.56\textwidth}
\begin{algorithm}[H]
    \caption{Centralized sampling-based planner}
    \label{algo:subtask-sampling}
    \begin{algorithmic}[1]
        \Require Subtask-sequence sets $\{\mathcal{G}_i\}_{i\in\mathcal{I}}$ for specifications $\{\varphi_i\}_{i\in\mathcal{I}}$, maximum sampling iterations $K$
        \State Initialize tree $\mathcal{T}$ with the nominal (empty) root $x_{\mathrm{root}}$
        \For{$k=0,1,\ldots,K$}
            \State Sample $x_{\mathrm{parent}} \sim \mathrm{Unif}(\mathcal{T})$ \label{line:subtask-sampling-sample}
            \State $x_{\mathrm{new}} \gets \Call{Extend}{x_{\mathrm{parent}},\{\mathcal{G}_i\}_{i\in\mathcal{I}}}$ \label{line:subtask-sampling-extend}
            \State \textbf{if} $x_{\mathrm{new}}$ is \textsc{Invalid}, \textbf{then continue}
            \State \textbf{if} $x_{\mathrm{new}}\notin\mathcal{T}$, \textbf{then} add $x_{\mathrm{new}}$ with parent $x_{\mathrm{parent}}$
            \State \textbf{if} $x_{\mathrm{new}}$ is terminal, \textbf{then} \textbf{return} the root-to-$x_{\mathrm{new}}$ subtask sequence
        \EndFor
        \State \Return \textsc{Failure}
    \end{algorithmic}
\end{algorithm}
\end{minipage}
\hfill
\begin{minipage}[t]{0.42\textwidth}
    \begin{algorithm}[H]
    \caption{Extend}
    \label{algo:subtask-extend}
        \begin{algorithmic}[1]
            \Require Node $x_{\mathrm{parent}}$ and $\{\mathcal G_i\}_{i\in\mathcal I}$
            \State Identify expandable components $\mathcal I_{\mathrm{exp}}$
            \State Sample a nonempty subset $\mathcal I_{\mathrm{s}}\subseteq\mathcal I_{\mathrm{exp}}$
            \State $x' \gets x_{\mathrm{parent}}$, $A^+ \gets \{\}$, $A^- \gets \{\}$
            \For{$i\in\mathcal I$}
                \State Determine the current subtask for component $i$, update $A^+$ and $A^-$ with its reach and avoid conditions, and update $x'$ accordingly \label{algo-line:extend-loop}
            \EndFor
            \State \textbf{if} $x'$ is valid, \textbf{then return} $x'$
            \State \Return \textsc{Invalid}
        \end{algorithmic}
    \end{algorithm}
\end{minipage}
\vspace{-0.5mm}
\end{figure*}

\textbf{Sampling-based task allocation.} 
The overall process of the proposed method is summarized in Algorithm~\ref{algo:subtask-sampling}.
For each $\varphi_i$, $i \in \mathcal{I}$, let $\mathcal{G}_i=\{g_i^1,g_i^2,\ldots,g_i^{M_i}\}$ denote the set of candidate subtask sequences, where $M_i=|\mathcal G_i|$. 
Each $g_i^j\in\mathcal G_i$ is a finite sequence $g_i^j=[\sigma_{i,0}^j,\sigma_{i,1}^j,\ldots]$, where $\sigma_{i,k}^j=(A_{i,k}^{+,j},A_{i,k}^{-,j})$ denotes a reach-avoid subtask.
The planner constructs a search tree $\mathcal T$ over the subtask space, where each node $x$ represents the progress of all specification components as a set of index tuples $(i,j,k)$, with exactly one tuple for each $i\in\mathcal I$.
For each tuple, $i$ identifies the specification component, $j$ identifies the selected sequence $g_i^j$, and $k$ identifies the current subtask $\sigma_{i,k}^j$ in that sequence, with $k=|g_i^j|$ indicating that the planner has reached the end of the sequence.
We use $(i,0,0)$ when no sequence has yet been selected for component $i$, such that the root node is $x_{\mathrm{root}}=\{(i,0,0)\mid i\in\mathcal I\}$.
At each iteration, a node $x_{\mathrm{parent}}$ is uniformly sampled from $\mathcal T$ and extended using Algorithm~\ref{algo:subtask-extend} to obtain a successor $x_{\mathrm{new}}$ with compatible next subtasks.
If such a successor is found, it is added to $\mathcal T$.
A node is terminal when, for every tuple $(i,j,k)$, a sequence has been selected and the planner has reached its end, i.e., $j\neq0$ and $k=|g_i^j|$ for all $i\in\mathcal I$.
The path from $x_{\mathrm{root}}$ to a terminal node then yields a subtask sequence satisfying all specifications.
More generally, our method also applies when the specifications are provided as a collection of Büchi automata, as the planner only requires the corresponding reach-avoid subtask sequences.

\textbf{Subtask combination compatibility.}
Algorithm~\ref{algo:subtask-extend} identifies a compatible combination of next subtasks for a sampled node.
Given $x_{\mathrm{parent}}$, we define the expandable components as $\mathcal I_{\mathrm{exp}}:=$ $\{i\in\mathcal I\mid j=0$ or $k<|g_i^j|$ for the tuple $(i,j,k)\in x_{\mathrm{parent}}\}$, i.e., components for which no sequence has yet been selected, or the planner has not yet reached the end of the selected sequence.
We then sample a nonempty subset $\mathcal I_{\mathrm{s}}\subseteq\mathcal I_{\mathrm{exp}}$ of components to consider at the current planning step, since enumerating all possible subsets can be intractable.
For each $i\in\mathcal I_{\mathrm{s}}$, if its tuple is $(i,0,0)$, we sample a sequence $g_i^j\in\mathcal G_i$ and select its first subtask; otherwise, for tuple $(i,j,k)$, we select the next subtask in $g_i^j$.
The corresponding tuples are updated to record this planning progress, while the tuples for $i\notin\mathcal I_{\mathrm{s}}$ remain unchanged.
To check compatibility, for each $i\in\mathcal I_{\mathrm{s}}$, one conjunctive clause $\tilde A_i^+$ is sampled from the reach condition of its selected subtask, while $\tilde A_i^+=\emptyset$ for $i\in\mathcal I\setminus\mathcal I_{\mathrm{s}}$.
The joint reach condition is then $A_{\mathrm{joint}}^+:=\bigcup_{i\in\mathcal I}\tilde A_i^+$.
In contrast, every component contributes its full avoid condition $A_i^-$, yielding $A_{\mathrm{joint}}^-:=\bigcup_{i\in\mathcal I}A_i^-$, since omitting an unselected component's avoid condition could allow it to be satisfied during execution, after which the corresponding specification can no longer be satisfied.
The sampled successor is invalid if there exists a conjunctive clause $B\in A_{\mathrm{joint}}^-$ such that $B\subseteq A_{\mathrm{joint}}^+$.
More details of the planner can be found in Appendix~\ref{app:implementation}.

\textbf{Soundness and (probabilistic) completeness.}
Many automata-based search methods~\citep{guo2013revising, guo2015multi, qiu2023instructing, shah2024ltl, jackermeier2025deepltl} have been developed to find accepting paths that satisfy a given LTL specification.
These methods search for a finite prefix reaching an accepting state together with a cyclic suffix, and the resulting prefix-suffix paths, i.e., subtask sequences, are guaranteed to satisfy the given specification.
We leverage these methods to obtain candidate subtask sequences for each specification component over which our planner operates.
Given this property, we have the following soundness guarantee:

\begin{restatable}[Soundness]{theorem}{soundness}
\label{thm:soundness}
Let $\tau$ be a path returned by the proposed sampling-based planner.
Then $\tau$ is an accepting path satisfying the global specification
$\varphi_{\texttt{global}}$, i.e.,
$\tau \models \varphi_{\texttt{global}}$.
\end{restatable}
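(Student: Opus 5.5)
The plan is to reduce soundness of the joint plan to the per-component soundness of the candidate subtask sequences, which we take as given from the automata-based search methods cited above. The argument has three steps.

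\textbf{Step 1: unroll the returned path.} Let $x_0=x_{\mathrm{root}},x_1,\ldots,x_T$ be the root-to-terminal path returned by Algorithm~\ref{algo:subtask-sampling}. Each $x_t$ with $t\ge1$ was produced by a valid call to Algorithm~\ref{algo:subtask-extend}. Associated with each step is a joint reach set $A^+_{\mathrm{joint},t}$ and a joint avoid set $A^-_{\mathrm{joint},t}$. Let $\tau$ be the trace whose $t$-th (segment-defining) letter is $A^+_{\mathrm{joint},t}$, with the suffix cycle repeated as in the prefix--suffix construction.

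\textbf{Step 2: project onto each component.} Fix $i\in\mathcal I$. Terminality gives $j_i\neq0$ and $k=|g_i^{j_i}|$. Tuple indices advance by exactly one, and only when $i\in\mathcal I_{\mathrm s}$ at that step. Hence the steps at which $i$ advances are in order-preserving bijection with the subtasks $\sigma_{i,0}^{j_i},\ldots$ of $g_i^{j_i}$. At each such step, the letter contains the sampled clause $\tilde A_i^+$ of the reach condition of the current subtask, so the reach transition of $\mathcal B_{\varphi_i}$ fires there. This uses the fact that only the propositions relevant to $\varphi_i$ matter: individual components use the disjoint alphabets $AP_i$, and $\texttt{dep}$ and $\texttt{cnt}$ are read through their LTL translations (Appendix~\ref{app:specs}). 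At every step, including steps where $i\notin\mathcal I_{\mathrm s}$, validity ensures no avoid clause $B\in A_i^-\subseteq A^-_{\mathrm{joint}}$ satisfies $B\subseteq A^+_{\mathrm{joint}}$. So the letter never triggers a transition leaving the accepting run, and the automaton remains at the current state $q$.

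Consequently, the projection of $\tau$ onto component $i$ is a word admitting a run of $\mathcal B_{\varphi_i}$ that follows exactly the prefix--suffix path encoded by $g_i^{j_i}$. By the cited guarantee, this run is accepting, so $\tau\models\varphi_i$.

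\textbf{Step 3: conclude.} Since $\varphi_{\texttt{global}}=\bigwedge_{i\in\mathcal I}\varphi_i$ and $\tau\models\varphi_i$ for every $i$, we get $\tau\models\varphi_{\texttt{global}}$. Acceptance of $\tau$ in the product of the $\mathcal B_{\varphi_i}$ then follows, because the product accepts the intersection of the languages.

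\textbf{Main obstacle.} The delicate part is the stuttering argument in Step 2 for component $i$ at steps where it is not selected. We need the letter $A^+_{\mathrm{joint}}$ to induce a self-loop, or at least some non-violating transition, on $q$. The reach-avoid abstraction of \citet{jackermeier2025deepltl} encodes this: every letter not matching an avoid clause either progresses or stays on an accepting-run-consistent transition. I would state this explicitly as the semantics of $(\alpha^+,\alpha^-)$, namely that any letter avoiding $\alpha^-$ keeps the run at $q$ until $\alpha^+$ occurs, and invoke it.

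A secondary care point is the suffix. The loop part of each $g_i^{j_i}$ must be repeated, and the components have different loop lengths. I would handle this by taking the least-common-multiple repetition of the planned cycle, or by noting that the per-component argument only needs each component's subtasks to appear infinitely often in order, which the repeated joint cycle provides.
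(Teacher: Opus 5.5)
Your proposal follows the paper's own route, with more detail: the paper also argues that the returned path preserves the order of subtasks within each selected $g_i^{j_i}$ and that each subtask is compatible with its successor, infers $\tau\models\varphi_i$ from the prefix--suffix guarantee of the component sequences, and concludes by conjunction. The stuttering property you single out as the main obstacle is what the paper leaves implicit in the word ``compatible'', and it does not follow from the paper's definition of the avoid part (which excludes only transitions lying on no accepting run, so a non-avoid letter can still move the run off the chosen $g_i^{j_i}$); so, as you already propose, it has to be adopted as an explicit assumption on the reach-avoid sets rather than attributed to the abstraction.
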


\vspace{-2mm}
Soundness guarantees that any path returned by the proposed method satisfies the global specification.
We next show probabilistic completeness, i.e., if a feasible path exists, the probability of finding one converges to one given sufficient iterations.
The proofs are provided in Appendix~\ref{app:theorems}.

\begin{restatable}[Probabilistic completeness]{theorem}{probcomplete}
\label{thm:prob-complete}
Suppose that there exists a feasible subtask sequence that satisfies
$\varphi_{\texttt{global}}$.
Then, the probability that Algorithm~\ref{algo:subtask-sampling}
finds a feasible subtask sequence converges to one as the number of
sampling iterations approaches infinity.
\end{restatable}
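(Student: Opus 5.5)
The plan is the standard argument for sampling-based trees: fix a witness path, and show that each of its finitely many extensions is produced with probability bounded below at every iteration. First, the hypothesis gives a feasible subtask sequence. As in the remark before Theorem~\ref{thm:soundness}, it should be representable as a finite sequence of valid nodes $x_0=x_{\mathrm{root}},x_1,\ldots,x_L$ in the planner's node space, with $x_L$ terminal. Each $x_{\ell+1}$ arises from $x_\ell$ by one admissible outcome of \textsc{Extend}. That outcome is determined by three choices: a nonempty subset $\mathcal I_{\mathrm s}^{(\ell)}\subseteq\mathcal I_{\mathrm{exp}}(x_\ell)$; for components with tuple $(i,0,0)$, the chosen sequence $g_i^j$; and the sampled conjunctive clauses $\tilde A_i^+$. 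These choices must be such that no clause of $A^-_{\mathrm{joint}}$ is contained in $A^+_{\mathrm{joint}}$.

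If the feasible sequence is only given abstractly, I would use Theorem~\ref{thm:soundness} in reverse together with the finiteness of $\{\mathcal G_i\}$. The step I expect to be the main obstacle is deciding what ``feasible'' means: it must be a sequence realizable inside the planner's search space. I would state explicitly that this is the interpretation used.

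Next I bound the probability of each step. All sampling sets are finite: $\mathcal I$ has size $N+2$, each $\mathcal G_i$ is finite, and each reach condition has finitely many clauses. The number of nodes is also finite, at most $\prod_{i\in\mathcal I}\bigl(1+\sum_j(|g_i^j|+1)\bigr)=:|\mathcal X|$. Assuming every sampling distribution in \textsc{Extend} gives positive probability to each option (for instance, uniform), there is $p_{\min}>0$ such that, whenever $x_\ell$ is the chosen parent, \textsc{Extend} returns exactly $x_{\ell+1}$ with probability at least $p_{\min}$. Because $\mathcal T$ never has more than $|\mathcal X|$ nodes, a node already in $\mathcal T$ is picked as parent with probability at least $1/|\mathcal X|$. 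So if $x_\ell\in\mathcal T$, then in any given iteration $x_{\ell+1}$ is added (or is already present) with probability at least $q:=p_{\min}/|\mathcal X|>0$, independently of the past. This independence follows because the random choices are fresh at each iteration.

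Finally I chain the stages. Let stage $\ell$ end at the first iteration at which $x_{\ell+1}\in\mathcal T$. Each stage length is stochastically dominated by a geometric random variable with parameter $q$. Hence the time $T$ to add $x_L$ is dominated by a sum of $L$ independent geometrics, and
\[
\Pr(T>K)\le \sum_{\ell=0}^{L-1}\Pr\bigl(\mathrm{Geom}(q)>K/L\bigr)\le L(1-q)^{\lfloor K/L\rfloor}\to 0.
\]
Once $x_L$, or any earlier terminal node, is added, the algorithm returns. By Theorem~\ref{thm:soundness}, whatever is returned is feasible. Therefore the success probability is at least $1-L(1-q)^{\lfloor K/L\rfloor}$, which converges to one as $K\to\infty$, and the convergence is exponentially fast.

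One subtlety needs care: the algorithm does not add a duplicate $x_{\mathrm{new}}$ that is already in $\mathcal T$, so its parent may differ from $x_\ell$. This is harmless, because only membership of $x_{\ell+1}$ in $\mathcal T$ matters for the next stage.
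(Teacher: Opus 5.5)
Your proposal is correct and follows essentially the same argument as the paper: fix a witness chain of valid nodes, bound the per-iteration probability of making the next extension below by $1/|\mathcal X|$ times the probability of the right \textsc{Extend} outcome, and show that the chance of completing fewer than the required number of extensions in $K$ iterations vanishes. The paper bounds a binomial tail $\Pr[X_K<m]$ directly, whereas your stagewise geometric domination with a union bound gives the cleaner estimate $L(1-q)^{\lfloor K/L\rfloor}$ and correctly uses only a conditional lower bound on each iteration's success probability rather than treating the trials as independent. Theorem~\ref{thm:soundness} cannot be applied ``in reverse'' to produce the witness chain, so drop that aside; like the paper, you must simply read the hypothesis as saying the feasible sequence is realizable as a chain of valid nodes in the planner's search space.
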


\vspace{-3mm}
\subsection{Policy Learning}
\label{sec:policy-learning}
\vspace{-1mm}

The goal is to learn a decentralized goal-conditioned policy $\pi_i(a_i\mid o_i,\sigma_i)$ for each agent $i$, which takes as input its local observation $o_i\in\mathcal O_i$ and current reach-avoid subtask $\sigma_i$.
We assume that all agents are homogeneous.
The challenge is that the same set of local subtasks can correspond to different coordination requirements under different specifications.
For example, two agents may have the same local subtasks of reaching their respective target regions, while one specification allows them to complete these subtasks independently and another requires them to do so simultaneously.
Thus, the global coordination requirement cannot be determined from the local subtasks alone.
Consequently, commonly adopted cooperative MARL schemes based on shared team rewards and centralized critics~\citep{sunehag2017value,foerster2018counterfactual,rashid2020monotonic,yu2022surprising} are not well suited to our setting.
Instead, we use a subtask-dependent reward and cost function evaluated with respect to each agent's own subtask, and follow an IPPO-style approach~\citep{de2020independent} to learn decentralized policies and critics.

During training, the high-level planner described in Section~\ref{sec:task-decomposition-allocation} is not required.
We independently sample a reach-avoid subtask for each agent and train the low-level policies to complete their respective subtasks.
Although the subtasks are sampled independently, agents still interact and must account for one another, allowing the policies to learn interaction-aware behaviors; for example, multiple agents may sample subtasks that require reaching the same region and must avoid collisions with other agents.
Each agent learns to satisfy its subtask by receiving positive rewards for subtask satisfaction while being penalized for collisions and subtask violations.
High-level coordination requirements, such as synchronized or ordered subtask execution, are instead handled at test time by the centralized planner, which controls each agent's current subtask and its progression.
The learned subtask-conditioned policies are then reused to execute these planned subtasks across different specifications, enabling zero-shot generalization to unseen specifications without retraining.
We instantiate the low-level policies by adapting GenZ-LTL~\citep{guo2026one} because it explicitly models safety constraints, which are common in our specifications, while our framework is also compatible with other goal-conditioned RL methods capable of learning reach-avoid policies~\citep{qiu2023instructing, yalcinkaya2024compositional}.
Further implementation details are provided in Appendix~\ref{app:implementation}.

\vspace{-1mm}
\section{Experiments}
\label{sec:exps}
\vspace{-1mm}

In this section, we evaluate our method, SpecMAGIC, in a navigation environment across a wide range of multi-agent task specifications to answer the following questions: 
\textbf{Q1:} Can SpecMAGIC generalize zero-shot to unseen task specifications?
\textbf{Q2:} How well does SpecMAGIC scale as the number of agents increases?
\textbf{Q3:} What is the impact of the proposed sampling-based planner on SpecMAGIC versus a reference product-automaton-based planner?

\textbf{Environments.}
We consider a multi-agent \texttt{Zone} environment based on prior work~\citep{jackermeier2025deepltl, guo2026one, guo2026specrlbench}, where multiple agents with continuous action spaces and high-dimensional lidar observations navigate among randomly positioned colored regions to satisfy the given task specifications.
The \texttt{Zone} environment is similar to the commonly used Multi-Agent Particle Environment (MPE) for MARL~\citep{lowe2017multi}, but involves complex long-horizon tasks, including reaching sequences of regions with varying lengths, satisfying diverse coordination constraints among agents, and avoiding inter-agent collisions at all times.
An illustration of the environment with an example trajectory is shown in Figure~\ref{fig:envs-trajs}; more details are provided in Appendix~\ref{app:envs}.

\textbf{Baselines.}
We compare our method with ACC-MARL~\citep{yalcinkaya2025automata}, the only existing method that, to our knowledge, targets generalization across temporal-logic specifications in MARL. 
ACC-MARL first learns embeddings of compositions of deterministic finite automata (DFAs).
It then learns decentralized policies, with each agent's policy conditioned on the global state information together with the DFA embeddings of the specifications assigned to all agents.
Further details of the implementation are provided in Appendix~\ref{app:implementation}.

\textbf{LTL specifications.} 
For individual tasks $\Phi_{\texttt{ind}}$, we consider reach-avoid specifications of varying complexity, constructed from nested properties of the form $\neg\phi \until \varphi$ and combinations with $\mathsf{F}$, $\land$, and $\lor$.
For multi-agent tasks, we consider specifications that capture both independent and joint behaviors, including inter-agent dependency requirements $\Phi_{\texttt{dep}}$ and count requirements $\Phi_{\texttt{cnt}}$, both of which are constructed to involve all agents in the team rather than a subset of agents.
We then evaluate four types of specifications: $\Phi_{\texttt{ind}}$, $\Phi_{\texttt{ind/dep}}$, $\Phi_{\texttt{ind/cnt}}$, and $\Phi_{\texttt{ind/dep/cnt}}$.
The specifications used for evaluation are listed in Table~\ref{tab:complex-specs}.

\textbf{Evaluation metrics.}
For each LTL specification, we measure the success rate $\eta_s$ and violation rate $\eta_v$, corresponding to the fractions of trajectories that satisfy and violate the specification, respectively. 
The remaining trajectories are captured by $\eta_o = 1-(\eta_s+\eta_v)$. 
We also report the average number of steps $\mu$ required for successful trajectories to satisfy the specification, treating it as a secondary metric since efficiency is less informative when few trajectories succeed.

\begin{table*}[t]
\centering

\begin{minipage}[t]{0.71\textwidth}
\vspace{0pt}
\centering

\caption{
Evaluation results of finite-horizon tasks: success rate $\eta_s$,
violation rate $\eta_v$, and average steps $\mu$ to satisfy the complex
specifications listed in Table~\ref{tab:complex-specs}.
$\uparrow$: higher is better; $\downarrow$: lower is better.
\textbf{Bold}: the best performance for each metric.
Each value is reported as mean $\pm$ standard deviation over 5 seeds,
with 100 trajectories evaluated per seed.
}
\vspace{-3mm}
\label{tab:main-results-finite}

\begingroup
\setlength{\tabcolsep}{3pt}
\renewcommand{\arraystretch}{1.2}

\resizebox{\linewidth}{!}{
\begin{tabular}{cccccccc}
\toprule
\multicolumn{2}{c}{\multirow{2}{*}{Specifications}}
& \multicolumn{3}{c}{ACC-MARL}
& \multicolumn{3}{c}{Ours} \\
\multicolumn{2}{c}{}
& $\eta_s \uparrow$
& $\eta_v \downarrow$
& $\mu \downarrow$
& $\eta_s \uparrow$
& $\eta_v \downarrow$
& $\mu \downarrow$ \\ \midrule

\multirow{5}{*}{IND}
& $\Phi_{\texttt{ind}}$
& $0.21_{\pm0.09}$ & $0.66_{\pm0.08}$ & $510.29_{\pm105.01}$
& $\mathbf{0.95_{\pm0.02}}$
& $\mathbf{0.04_{\pm0.02}}$
& $\mathbf{290.33_{\pm36.64}}$ \\

& $\Phi_{\texttt{ind/dep}}$
& \multicolumn{3}{c}{\multirow{3}{*}{N/A}}
& $\mathbf{0.94_{\pm0.04}}$
& $\mathbf{0.05_{\pm0.03}}$
& $\mathbf{379.73_{\pm60.20}}$ \\

& $\Phi_{\texttt{ind/cnt}}$
& \multicolumn{3}{c}{}
& $\mathbf{0.95_{\pm0.02}}$
& $\mathbf{0.05_{\pm0.02}}$
& $\mathbf{380.11_{\pm49.64}}$ \\

& $\Phi_{\texttt{ind/dep/cnt}}$
& \multicolumn{3}{c}{}
& $\mathbf{0.91_{\pm0.04}}$
& $\mathbf{0.07_{\pm0.04}}$
& $\mathbf{460.72_{\pm75.88}}$ \\

& Average
& $0.21_{\pm0.09}$ & $0.66_{\pm0.08}$ & $510.29_{\pm105.01}$
& $\mathbf{0.94_{\pm0.03}}$
& $\mathbf{0.05_{\pm0.03}}$
& $\mathbf{377.72_{\pm82.62}}$ \\ \midrule

\multirow{5}{*}{OOD}
& $\Phi_{\texttt{ind}}$
& $0.04_{\pm0.02}$ & $0.74_{\pm0.09}$ & $832.44_{\pm23.00}$
& $\mathbf{0.96_{\pm0.02}}$
& $\mathbf{0.04_{\pm0.02}}$
& $\mathbf{379.79_{\pm33.89}}$ \\

& $\Phi_{\texttt{ind/dep}}$
& \multicolumn{3}{c}{\multirow{3}{*}{N/A}}
& $\mathbf{0.84_{\pm0.04}}$
& $\mathbf{0.10_{\pm0.04}}$
& $\mathbf{633.58_{\pm47.89}}$ \\

& $\Phi_{\texttt{ind/cnt}}$
& \multicolumn{3}{c}{}
& $\mathbf{0.94_{\pm0.04}}$
& $\mathbf{0.06_{\pm0.04}}$
& $\mathbf{433.42_{\pm37.13}}$ \\

& $\Phi_{\texttt{ind/dep/cnt}}$
& \multicolumn{3}{c}{}
& $\mathbf{0.83_{\pm0.05}}$
& $\mathbf{0.08_{\pm0.04}}$
& $\mathbf{659.28_{\pm43.10}}$ \\

& Average
& $0.04_{\pm0.02}$ & $0.74_{\pm0.09}$ & $832.44_{\pm23.00}$
& $\mathbf{0.89_{\pm0.07}}$
& $\mathbf{0.07_{\pm0.04}}$
& $\mathbf{526.52_{\pm129.06}}$ \\

\bottomrule
\end{tabular}
}

\endgroup
\end{minipage}
\hfill
\begin{minipage}[t]{0.27\textwidth}
\vspace{1mm}
\centering

\includegraphics[
    width=\linewidth,
    trim={45 0 20 85},
    clip
]{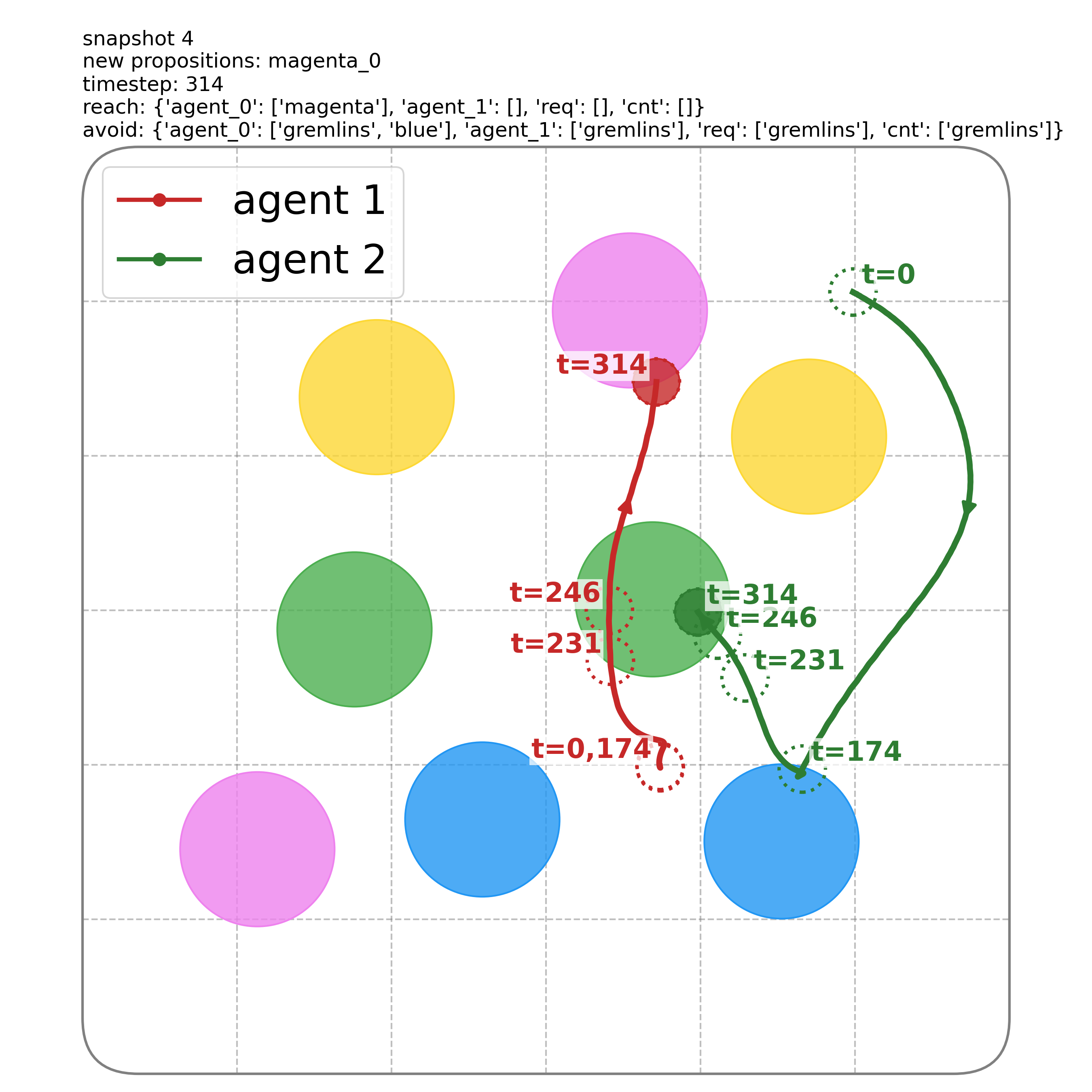}
\vspace{-6.5mm}
\captionof{figure}{
    Example trajectories of 2 agents in the \texttt{Zone} environment:
    \vspace{-7mm}
}
\[
    \begin{aligned}
    & \varphi_1: \neg \mathsf{b}\until\mathsf{m}, \varphi_2: \neg \mathsf{y}\until\mathsf{b}, \\[-1mm]
    & \varphi_{\texttt{dep}}: \mathsf{m}_1\ \texttt{Req}\ \mathsf{b}_2, \\[-1mm]
    & \varphi_{\texttt{cnt}}: \mathsf{Sync}[\{1,2\},\mathsf{g},2].
    \end{aligned}
    \]
\label{fig:envs-trajs}
\end{minipage}
\vspace{-3mm}
\end{table*}

\vspace{-1mm}
\subsection{Can SpecMAGIC generalize zero-shot to unseen task specifications (Q1)?}
\label{sec:zero-shot-generalization}
\vspace{-1mm}

\textbf{Finite-horizon tasks.} 
We first evaluate finite-horizon tasks, i.e., specifications that can be satisfied by finite-length trajectories.
We consider 3 agents, each assigned an individual task, together with additional inter-agent dependency $\Phi_{\texttt{dep}}$ and count requirements $\Phi_{\texttt{cnt}}$.
We evaluate both in-distribution (IND) and out-of-distribution (OOD) specifications, following the specification distribution used to train ACC-MARL.
ACC-MARL is trained on specifications represented as DFAs and sampled from a fixed distribution; IND specifications are drawn from this distribution, whereas OOD specifications fall outside it and require longer subtask sequences to satisfy.
The same IND/OOD specification sets used to evaluate ACC-MARL are used to evaluate SpecMAGIC.

As shown in Table~\ref{tab:main-results-finite}, our method consistently outperforms the baseline across all evaluated specifications, achieving higher success rates, lower violation rates, and better efficiency, as indicated by fewer steps required for satisfaction.
ACC-MARL supports only $\Phi_{\texttt{ind}}$, where each agent has an individual task and access to the task automata of the other agents.
Extending it to $\Phi_{\texttt{dep}}$ and $\Phi_{\texttt{cnt}}$ would require additional training distributions over these specifications, potentially biasing zero-shot evaluation.
ACC-MARL also assumes access to agent state information $s_i$, whereas agents in our environment receive only partial observations $o_i$ through LiDAR sensing.
Moreover, ACC-MARL pre-trains automaton embeddings using automata sampled from a fixed distribution, which limits its generalization when encountering unseen automata.
In contrast, our method decomposes specifications into reach-avoid subtasks and learns reusable subtask-conditioned policies, enabling unseen specifications to be handled by composing previously learned subtask behaviors and thereby improving OOD generalization.
Finally, performance generally decreases with increasing specification complexity, such as from $\Phi_{\texttt{ind}}$ to $\Phi_{\texttt{ind/dep/cnt}}$. 
Inter-agent dependency and count requirements introduce additional coordination constraints, while OOD specifications further increase the difficulty by requiring each agent to complete more subtasks and imposing more constraints in $\Phi_{\texttt{dep}}$ and $\Phi_{\texttt{cnt}}$. 
These factors make successful coordination more challenging and can increase the likelihood of inter-agent collisions in practice.
Nevertheless, our method maintains high success rates and low violation rates across these more complex and unseen specifications.

\textbf{Infinite-horizon tasks.}
Our method can also handle infinite-horizon specifications by repeatedly applying the same planning and execution procedure.
ACC-MARL is excluded from this evaluation since it relies on DFAs, which cannot represent such specifications.
We therefore report only the performance of our method, with the results provided in Appendix~\ref{app:additional-results} as a reference for future comparisons when suitable baselines become available.

\vspace{-1mm}
\subsection{How well does SpecMAGIC scale as the number of agents increases? (Q2)}
\label{sec:scale-agent-number}
\vspace{-1mm}

We evaluate scalability with respect to the number of agents using specifications from $\Phi_{\texttt{ind/dep/cnt}}$, with $N\in\{3,5,10\}$.
Note that the complexity of $\Phi_{\texttt{dep}}$ and $\Phi_{\texttt{cnt}}$ increases with $N$, since these specifications impose coordination requirements involving an increasing number of agents: more $\texttt{Req}$ operators are introduced in $\Phi_{\texttt{dep}}$ to ensure that each agent has at least one dependency with another agent, while the agent index sets of the count operators grow accordingly to include all agents (see Appendix~\ref{app:specs} for details).
We first evaluate the scalability of the high-level planner separately and then the overall framework.

\begin{wraptable}{r}{0.42\linewidth}
\centering
\vspace{-4.5mm}
\caption{
High-level planner scalability.
We report the product-automaton size $|\mathcal{B}_{\varphi_{\texttt{global}}}|$, the constructed tree size $|\mathcal{T}|$, sampling iterations, and planning time $t_{\texttt{path}}$, averaged over 10 seeds.
}
\label{tab:planner-scalability}
\Large
\renewcommand{\arraystretch}{1.1}
\vspace{-3mm}
\resizebox{1.0\linewidth}{!}{
\begin{tabular}{ccccc}
\toprule
$N$ & $|\mathcal{B}_{\varphi_{\texttt{global}}}|$ & $|\mathcal{T}|$ & Iterations         & $t_{\texttt{path}}$ \\ \midrule
3   & $98.7_{\pm42.5}$              & $7.9_{\pm3.3}$  & $64.7_{\pm100.5}$  & $0.02_{\pm0.02}$    \\
5   & $1789.3_{\pm1267.3}$          & $10.4_{\pm5.2}$ & $320.4_{\pm511.7}$ & $0.17_{\pm0.24}$    \\
10  & $>118814.3$                   & $12.6_{\pm5.2}$ & $477.4_{\pm530.2}$ & $8.64_{\pm0.77}$    \\ \bottomrule
\end{tabular}
}
\vspace{-3mm}
\end{wraptable}

\textbf{Evaluation of the high-level planner.}
We measure specification complexity by the number of states in the automaton corresponding to the global specification $\varphi_{\texttt{global}}$; this automaton is used only as a reference measure and is not required by our planner.
For the proposed sampling-based planner, we report the number of sampling iterations required to find a feasible path and the resulting tree size $|\mathcal{T}|$, which counts all explored nodes, including those not on the final feasible path.
We also report the average planning time $t_{\texttt{path}}$, excluding automaton-construction time; $t_{\texttt{path}}$ includes both finding feasible paths for the component specifications and the subsequent sampling process for the final multi-agent subtask allocation.
As shown in Table~\ref{tab:planner-scalability}, the automaton size grows rapidly with the number of agents and becomes intractable to construct for 10 agents (see Appendix~\ref{app:implementation} for details), while the sampling-based planner maintains a relatively small tree, with only a moderate increase in $|\mathcal{T}|$ and sampling iterations.

\begin{figure}[ht]
    \centering
    \includegraphics[width=0.8\linewidth]{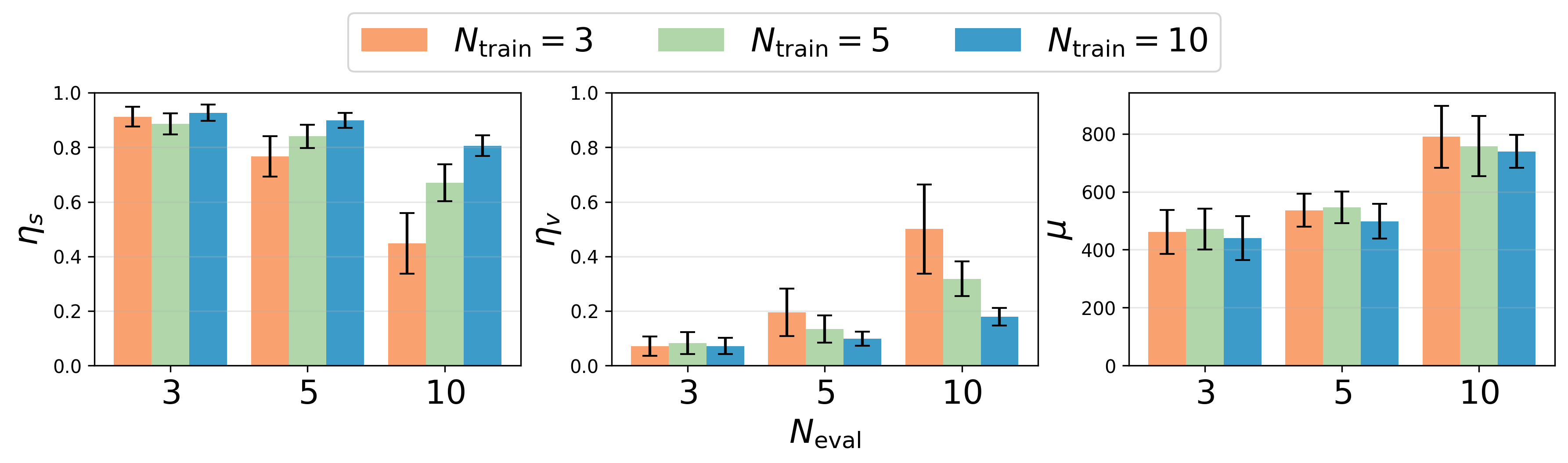}
    \vspace{-3mm}
    \caption{
    Performance under different numbers of agents during training and evaluation.
    We report the average success rate $\eta_s$, violation rate $\eta_v$, and number of steps $\mu$ to specification satisfaction for $N_{\texttt{train}},N_{\texttt{eval}}\in\{3,5,10\}$.
    Error bars indicate one standard deviation.
    }
    \label{fig:scalability-agent-number}
    \vspace{-3mm}
\end{figure}

\textbf{Evaluation of the overall framework.}
We next evaluate the overall framework by training policies with $N_{\texttt{train}}\in\{3,5,10\}$ agents and evaluating each policy with $N_{\texttt{eval}}\in\{3,5,10\}$ agents without retraining.
We report the success rate $\eta_s$, violation rate $\eta_v$, and average number of steps $\mu$ to satisfy the specifications, with results shown in Figure~\ref{fig:scalability-agent-number}.
When $N_{\texttt{eval}}=N_{\texttt{train}}$, performance slightly decreases as the number of agents increases, as indicated by lower success rates and higher violation rates, due to more frequent inter-agent interactions and the increasing complexity of $\Phi_{\texttt{dep}}$ and $\Phi_{\texttt{cnt}}$.
When $N_{\texttt{eval}}>N_{\texttt{train}}$, the performance drop becomes more noticeable, as the larger number of agents makes the environment more congested and shifts the observation distribution, making safety constraints more difficult to satisfy.
For a fixed $N_{\texttt{eval}}$, policies trained with more agents tend to perform better; for example, when $N_{\texttt{eval}}=10$, the policy trained with $N_{\texttt{train}}=5$ outperforms that trained with $N_{\texttt{train}}=3$.
In contrast, when $N_{\texttt{eval}}<N_{\texttt{train}}$, the policies maintain or slightly improve their performance, as training with more agents exposes the policies to more complex inter-agent interactions and leads to more robust execution when fewer agents are present.
Overall, these results show that our framework can be directly applied to different numbers of agents without retraining, while improving the robustness of the low-level policies could further enhance performance when scaling to larger multi-agent systems.

\vspace{-1mm}
\subsection{Ablation study: What is the impact of the proposed sampling-based planner on SpecMAGIC versus a reference search-based planner? (Q3)}
\label{sec:abltion-study}
\vspace{-1mm}

\vspace{-1mm}
\begin{figure}[ht]
    \centering
    \includegraphics[width=1.0\linewidth]{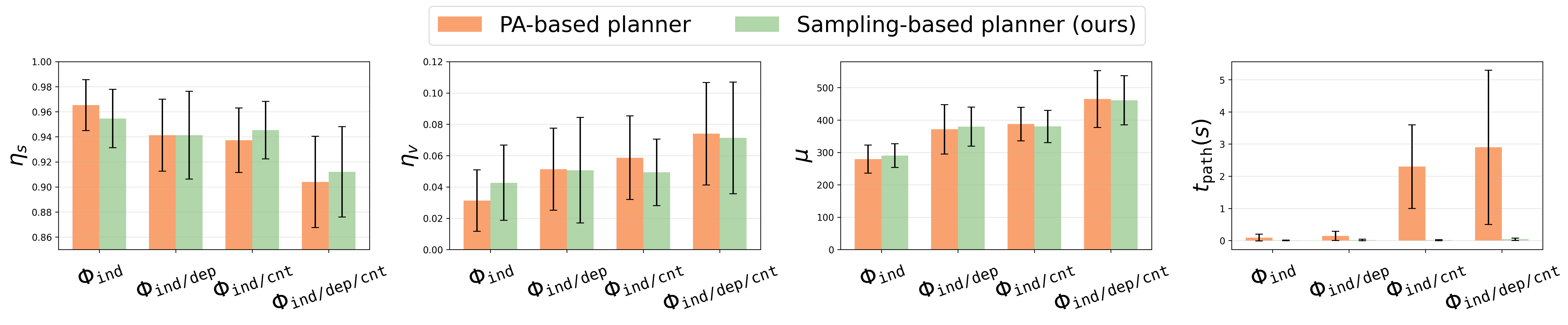}
    \vspace{-7mm}
    \caption{
    Average success rate $\eta_s$, violation rate $\eta_v$, and average number of steps $\mu$ to satisfy the IND specifications $\Phi_{\texttt{ind}}$, $\Phi_{\texttt{ind/dep}}$, $\Phi_{\texttt{ind/cnt}}$, and $\Phi_{\texttt{ind/dep/cnt}}$ listed in Table~\ref{tab:complex-specs}.
    }
    \label{fig:sampling-bfs-comparison}
\end{figure}
\vspace{-1mm}

We compare the proposed sampling-based planner with a product-automaton (PA)-based planner, which explicitly constructs the product automaton of the component specifications as discussed in Section~\ref{sec:task-decomposition-allocation} and performs graph search to find a feasible subtask sequence.
The resulting subtasks are executed sequentially using the same trained low-level policies as our sampling-based planner.
We evaluate both approaches on the 3-agent IND specifications in Table~\ref{tab:complex-specs}, with results shown in Figure~\ref{fig:sampling-bfs-comparison}.
We report the average planning time $t_{\texttt{path}}$ as defined in Section~\ref{sec:scale-agent-number}, excluding automaton-construction time for both methods.
Our sampling-based method achieves similar success rates, violation rates, and average steps to specification satisfaction as the search-based planner across all evaluated specifications, empirically validating the proposed sampling technique.
Meanwhile, our method requires substantially less planning time than the search-based planner, with the gap becoming more pronounced as specification complexity increases.
For specifications involving four or more agents evaluated in Section~\ref{sec:scale-agent-number}, the search-based planner fails to find a feasible path within a 10-minute time limit; thus, these results are excluded from the comparison.

\vspace{-1mm}
\section{Conclusion and Future Work}
\label{sec:conclusion}
\vspace{-1mm}

This work develops a hierarchical framework for zero-shot generalization across diverse LTL specifications in multi-agent systems.
The framework consists of a novel centralized sampling-based high-level planner with soundness and probabilistic completeness guarantees for planning over the reach-avoid subtask space, and decentralized subtask-conditioned policies for low-level execution.
Empirical results demonstrate that our method achieves strong generalization across diverse finite- and infinite-horizon specifications and remains effective as the number of agents increases.
A limitation of our framework is that the high-level planner remains centralized, which may limit scalability as the number of agents further increases.
Future work will investigate fully decentralized planning mechanisms to further improve scalability and integrate alternative safe RL methods~\citep{kushwaha2025survey} into the low-level policies that may better handle safety constraints.

\section*{Acknowledgement}
This work was supported in part by the U.S. National Science Foundation under grant CCF-2340776.

\subsection*{AI use statement}
In this work, we used generative AI tools for grammar checking and assistance with parts of the code.
All AI-assisted code was reviewed, verified, and tested for correctness by two authors.
We have reviewed all AI-assisted work and take responsibility for the final content of this work, including text, claims, and artifacts produced with the aid of generative AI.

\subsection*{Ethics statement}
This work does not introduce specific ethical concerns beyond those generally associated with autonomous multi-agent systems.
However, care should be taken when designing task specifications to ensure that they do not induce unsafe or undesirable behaviors.

\subsection*{Reproducibility statement}
To support reproducibility, we provide the source code and implementation details necessary to reproduce the experiments and results reported in this work in the supplementary materials.

\bibliography{iclr2027_conference}
\bibliographystyle{iclr2027_conference}

\appendix

\section{Theoretical Analysis}
\label{app:theorems}

\textbf{Soundness and (probabilistic) completeness.}
We first establish soundness, showing that any path returned by the proposed planner satisfies the global specification.

\soundness*

\begin{proof}
\vspace{-2mm}
    Let $\tau=[\sigma_0,\sigma_1,\ldots]$ be a path returned by Algorithm~\ref{algo:subtask-sampling}, constructed from the selected feasible subtask sequences $g_i^{j_i}\in\mathcal G_i$, $i\in\mathcal I$.
    By construction, the subtasks $\sigma$ in $\tau$ preserve their original order in each $g_i^{j_i}$, and each subtask in $\tau$ is compatible with its subsequent subtask.
    Therefore, $\tau\models\varphi_i$ for all $i\in\mathcal I$, and consequently, $\tau\models\bigwedge_{i\in\mathcal I}\varphi_i=\varphi_{\texttt{global}}$.
    \vspace{-2mm}
\end{proof}

Having established that every returned path satisfies the global specification, we next consider whether the planner can find such a path whenever one exists.
Specifically, we show probabilistic completeness: if a feasible subtask sequence exists, the probability of finding one converges to one as the number of sampling iterations increases.

\probcomplete*

\begin{proof}
    \vspace{-2mm}
    Let $M$ denote the maximum number of distinct tree nodes and $C$ the total number of possible subtask combinations across all specifications.
    Since $\{\mathcal G_i\}_{i\in\mathcal I}$ is finite, both $M$ and $C$ are finite constants determined by the specifications.
    Suppose that a node along a feasible path has been added to the tree. 
    By Algorithm~\ref{algo:subtask-sampling}, at the current iteration, this node is sampled with probability at least $1/M$, and a particular next subtask combination is sampled with probability $1/C$. 
    If this combination is compatible, the corresponding successor node is added to the tree. 
    Hence, the probability of making the corresponding successful extension in one iteration is at least $p=1/(MC)>0$.
    Let $m$ denote the number of successful extensions required to reach a terminal node, and let $X_K$ denote the number of successes after $K$ iterations. Then $
    \Pr[X_K<m]
    =
    \sum_{j=0}^{m-1}\binom{K}{j}p^j(1-p)^{K-j}
    \leq
    \sum_{j=0}^{m-1}\binom{K}{m-1}p^j(1-p)^{K-j}
    \leq
    \binom{K}{m-1}\sum_{j=0}^{m-1}(1-p)^K
    \leq
    \binom{K}{m-1}m e^{-pK}
    \leq
    \frac{m}{(m-1)!}K^m e^{-pK}
    \rightarrow 0$ as $K\rightarrow\infty$, where the transitions rely on (1) $m \ll K$, (2) $p<\frac{1}{2}$, and (3) $(1-p) \leq e^{-p}$.
    Therefore, the probability that Algorithm~\ref{algo:subtask-sampling} fails to reach a terminal node converges to zero, and hence Algorithm~\ref{algo:subtask-sampling} is probabilistically complete.
\end{proof}

\section{Experimental Settings}
\label{app:exps}

\subsection{Environments}
\label{app:envs}

We extend the $\texttt{Zone}$ environment used in prior work~\citep{jackermeier2025deepltl, guo2026one, guo2026specrlbench} to a multi-agent setting.
The environment contains colored regions associated with the base atomic propositions ${\mathsf{blue},\mathsf{green},\mathsf{magenta},\mathsf{yellow}}$.
Each colored region appears twice in the environment.
For the 10-agent setting, we use a slightly larger environment and increase the radius of the colored regions to accommodate the larger number of agents.
For agent $i$, the corresponding atomic propositions are indexed as $AP_i=\{\mathsf{blue}_i,\mathsf{green}_i,\mathsf{magenta}_i,\mathsf{yellow}_i\}$, representing agent $i$ reaching the corresponding colored regions, and the overall proposition set is given by $AP=\bigcup_i AP_i$.
Each agent is modeled as a \texttt{point} robot with continuous acceleration and steering actions.
We modify the observation space to include LiDAR measurements of neighboring agents in addition to the colored regions and ego-centric state information from onboard sensors, enabling interactions among multiple agents.
The initial positions of all agents and colored regions are randomly sampled at the beginning of each episode.
We construct environments with $N\in\{3,5,10\}$ agents to evaluate scalability with respect to the number of agents.
The maximum episode length is $T=2000$.
Illustrations of the resulting environments with 3, 5, and 10 agents are shown in Figure~\ref{fig:envs}.

\vspace{-1mm}
\begin{figure}[ht]
    \centering
    \begin{subfigure}{0.33\textwidth}
        \centering
        {\includegraphics[width=\textwidth]{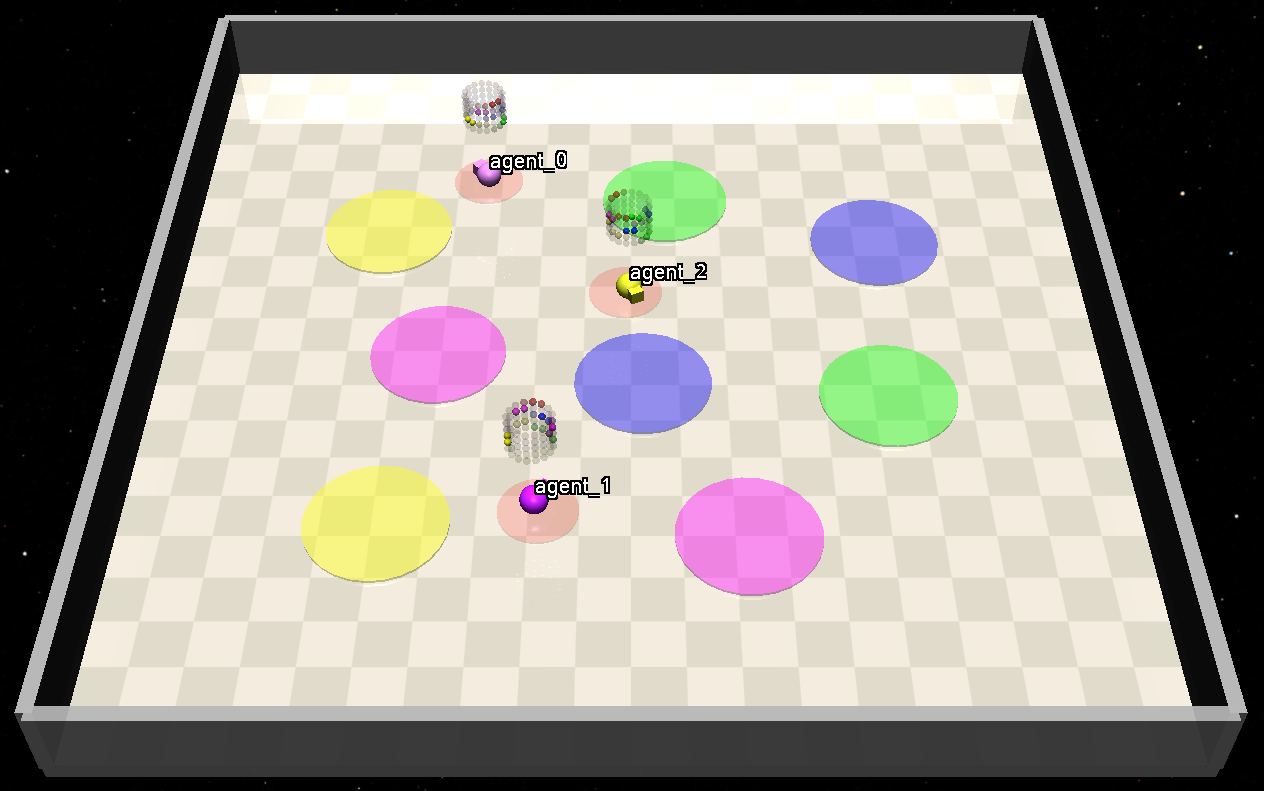}}
    \end{subfigure}
    \begin{subfigure}{0.315\textwidth}
        \centering
        \includegraphics[width=\textwidth]{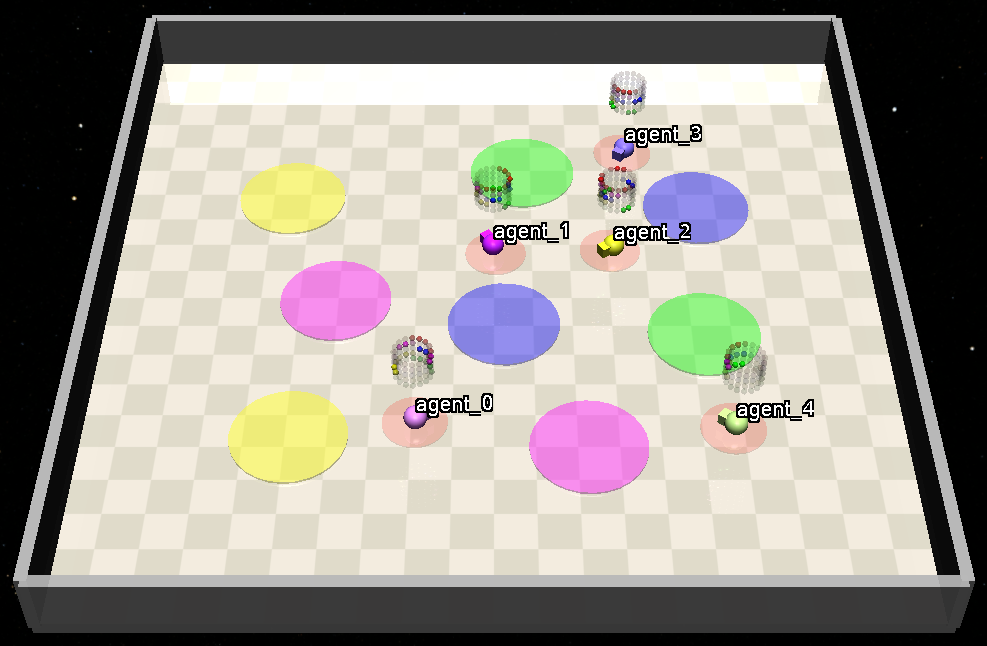}
    \end{subfigure}
    \begin{subfigure}{0.335\textwidth}
        \centering
        \includegraphics[width=\textwidth]{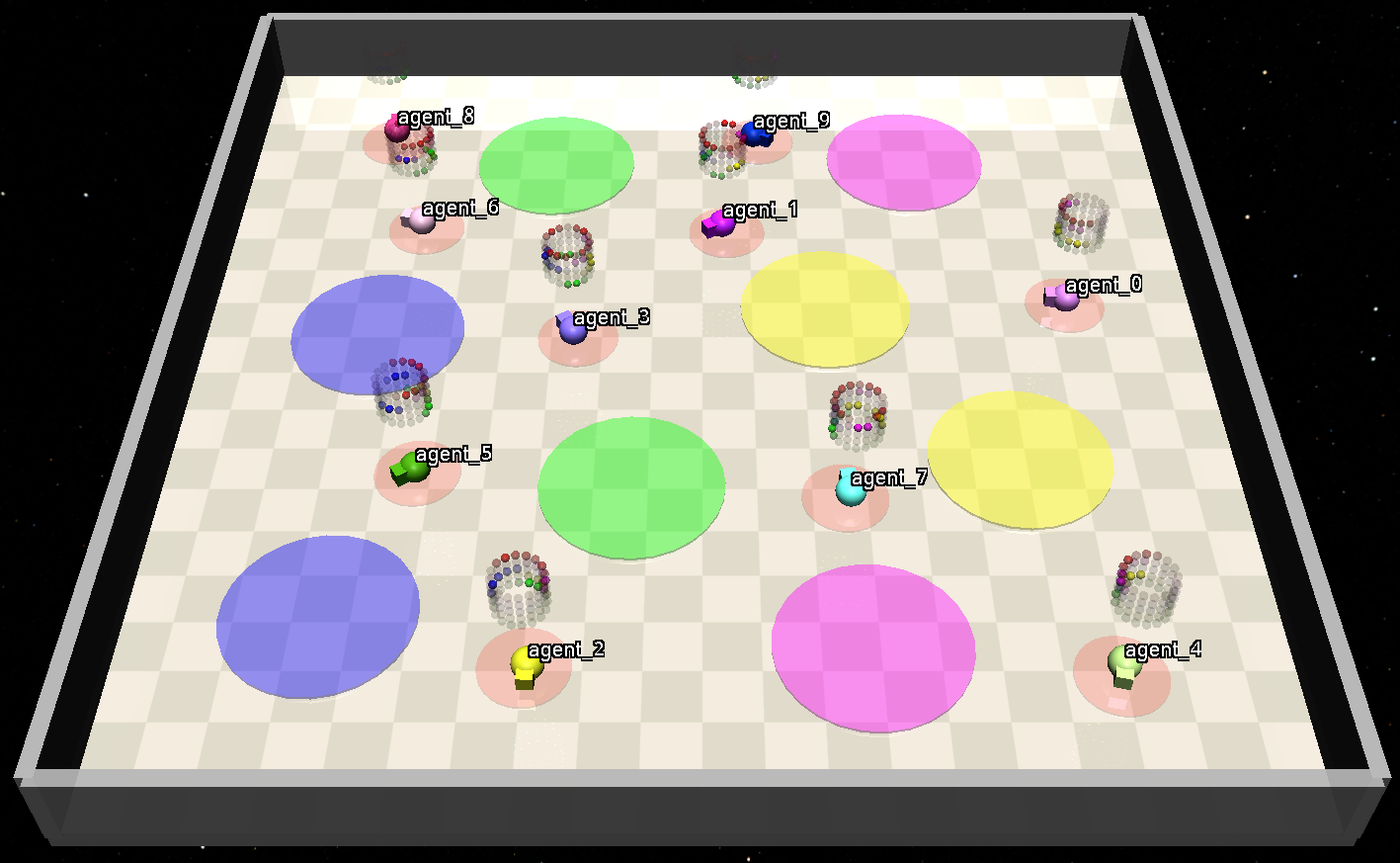}
    \end{subfigure}
    \caption{Visualizations of the \texttt{Zone} environments with $N=3$, $5$, and $10$ agents, from left to right.
    The positions of agents and zones are randomly sampled each episode.
    }
    \label{fig:envs}
\end{figure}

\subsection{Specifications}
\label{app:specs}

\textbf{Finite- and infinite-horizon tasks.}
We evaluate the methods on specifications with different types and levels of complexity, as summarized in Table~\ref{tab:complex-specs}. 
While many more specifications can be constructed, exhaustive evaluation is intractable since the space of possible specifications is infinite.
We therefore select a representative set of specifications covering different types and levels of complexity.
For each agent, we first define a set of single-agent $\texttt{sa}$ specifications that describe its individual task.
We then define inter-agent requirement $\texttt{dep}$ specifications to impose temporal dependencies between agents and count $\texttt{cnt}$ specifications to impose synchronous or asynchronous constraints on the number of agents satisfying a proposition. 
Based on these specifications, we construct four classes of multi-agent tasks: $\Phi_{\mathrm{indep}}$ contains only independent single-agent specifications, $\Phi_{\mathrm{indep/dep}}$ additionally includes requirement constraints, $\Phi_{\mathrm{indep/cnt}}$ includes count constraints, and $\Phi_{\mathrm{indep/dep/cnt}}$ combines all three types.
In comparison, the OOD specifications are structurally more complex, involving longer temporal dependencies and more compositional constraints than those encountered during training. 

\textbf{Scalability experiments.}
For the scalability experiments with different numbers of agents, we randomly select single-agent specifications from $\varphi_1$-$\varphi_9$ for each agent and construct new \texttt{dep} and \texttt{cnt} specifications according to the number of agents, such that every agent is involved in the inter-agent constraints. 
We then combine the single-agent, \texttt{dep}, and \texttt{cnt} specifications to form $\Phi_{\mathrm{indep/dep/cnt}}$ for evaluation. 
We also retrain the policies for each $N_{\mathrm{train}}\in\{3,5,10\}$ and evaluate each trained policy with $N_{\mathrm{eval}}\in\{3,5,10\}$ agents.
This procedure allows us to evaluate the scalability of the methods as the number of agents and the resulting specification complexity increase.

\textbf{Ablation study of the search-based method.}
For the ablation study, we evaluate the three IND global specifications in $\Phi_{\mathrm{indep/dep/cnt}}$ for the three-agent setting, i.e., $\varphi_1 \land \varphi_2 \land \varphi_3 \land \varphi_{10} \land \varphi_{13}$, $\varphi_4 \land \varphi_5 \land \varphi_6 \land \varphi_{11} \land \varphi_{14}$, and $\varphi_7 \land \varphi_8 \land \varphi_9 \land \varphi_{12} \land \varphi_{15}$.
The resulting automata contain an average of $99.67 \pm 34.72$ states and $1851.00 \pm 908.17$ transitions, while the corresponding individual specifications have an average of $6.53 \pm 3.30$ states and $21.47 \pm 19.23$ transitions.
We use these specifications to compare the proposed sampling-based method with the search-based method and demonstrate that sampling provides a more practical approach for handling complex global specifications with large automata.

\textbf{Translation of \texttt{dep} and \texttt{cnt} into LTL formulas.}
The proposed \texttt{dep} and \texttt{cnt} operators provide compact notation for inter-agent coordination requirements and can be translated into standard LTL formulas.
The operator $\alpha\,\texttt{Req}\,\beta$ requires $\beta$ to occur before the first occurrence of $\alpha$ and can be translated as $\neg\alpha\until\beta$.
The operator $\alpha\,\texttt{EachReq}\,\beta$ requires this prerequisite relation to hold for every non-consecutive occurrence of $\alpha$ and can be translated as $(\neg\alpha \until \beta)\land\always\left((\alpha \land \mathsf{X}\neg\alpha)\rightarrow\mathsf{X}(\neg\alpha\ \mathsf{W}\ \beta)\right)$, where $\mathsf{X}\ \varphi$ means that $\varphi$ holds at the next time step, and weak until is defined as $\varphi_1 \ \mathsf{W}\ \varphi_2 \equiv (\varphi_1 \until \varphi_2) \lor \always \varphi_1$~\citep{baier2008principles}.
For the \texttt{cnt} operators, let $\mathcal{I}$ denote the set of involved agents, $\alpha_i$ the proposition associated with agent $i\in\mathcal{I}$, and $m$ the required number of agents.
$\texttt{Sync}[\mathcal{I},\alpha,m]$ requires at least $m$ agents in $\mathcal{I}$ to satisfy their corresponding $\alpha_i$ simultaneously, and can be translated as the disjunction over all subsets $\mathcal{K}\subseteq\mathcal{I}$ with $|\mathcal{K}|=m$ of $\event(\bigwedge_{i\in\mathcal{K}}\alpha_i)$.
Similarly, $\texttt{Async}[\mathcal{I},\alpha,m]$ requires at least $m$ agents to each satisfy their corresponding $\alpha_i$ without requiring simultaneous satisfaction, and can be translated as the disjunction over all such subsets of $\bigwedge_{i\in\mathcal{K}}\event\alpha_i$.
Boolean combinations of \texttt{dep} and \texttt{cnt} operators are translated compositionally using standard LTL connectives.
Thus, these operators do not extend the expressiveness of LTL, but provide convenient notation for expressing inter-agent dependency and count requirements.

\begin{table}[ht]
\centering
\caption{
Specifications used for evaluation.
$\mathsf{b}$, $\mathsf{g}$, $\mathsf{m}$, and $\mathsf{y}$ denote $\mathsf{blue}$, $\mathsf{green}$, $\mathsf{magenta}$, and $\mathsf{yellow}$, respectively.
$\mathsf{sa}$ and $\mathsf{ma}$ denote specifications for single-agent and multi-agent behaviors, respectively. 
Top: finite-horizon specifications with 3 agents, with results in Table~\ref{tab:main-results-finite}. 
Bottom left: finite-horizon specifications with 5 and 10 agents for the scalability study, with results in Figure~\ref{fig:scalability-agent-number}. 
Bottom right: infinite-horizon specifications with 3 agents, with results in Table~\ref{tab:main-results-infinite}.
}
\vspace{1mm}
\label{tab:complex-specs}
\LARGE
\renewcommand{\arraystretch}{1.35}
\resizebox{1.0\linewidth}{!}{
\begin{tabular}{ccclcl}
\Xhline{1.8pt}
\multicolumn{2}{c}{Specification types}                                                      & \multicolumn{2}{c}{IND}                                                                                                                                                                                                                                                & \multicolumn{2}{c}{OOD}                                                                                                                                                                                                                                                         \\ \midrule
\multirow{19}{*}{\begin{tabular}[c]{@{}c@{}}Finite-\\ horizon\\ (3 agents)\end{tabular}} & $\texttt{sa}$  & $\varphi_{1}$                   & $(\neg \mathsf{m}\ \mathsf{U}\ (\mathsf{b} \lor \mathsf{g})) \land (\ \mathsf{F}\ \mathsf{m})$                                                                                                                                       & $\varphi_{16}$                  & $\neg \mathsf{m}\ \mathsf{U}\ (\mathsf{b} \land \ \mathsf{F}\ (\neg \mathsf{y}\ \mathsf{U}\ (\mathsf{g} \land \ \mathsf{F}\ \mathsf{m})))$                                                                                                    \\
                                                                            & $\texttt{sa}$  & $\varphi_{2}$                   & $(\ \mathsf{F}\ \mathsf{b}) \land (\neg \mathsf{b}\ \mathsf{U}\ (\mathsf{g} \land \ \mathsf{F}\ \mathsf{y}))$                                                                                                                        & $\varphi_{17}$                  & $\neg \mathsf{b}\ \mathsf{U}\ (\mathsf{g} \land \ \mathsf{F}\ (\neg \mathsf{m}\ \mathsf{U}\ (\mathsf{y} \land \ \mathsf{F}\ \mathsf{b})))$                                                                                                    \\
                                                                            & $\texttt{sa}$  & $\varphi_{3}$                   & $\neg (\mathsf{m} \lor \mathsf{y})\ \mathsf{U}\ (\mathsf{b} \land \ \mathsf{F}\ \mathsf{g})$                                                                                                                                         & $\varphi_{18}$                  & $\neg (\mathsf{m} \lor \mathsf{y})\ \mathsf{U}\ (\mathsf{b} \land \ \mathsf{F}\ (\neg \mathsf{b}\ \mathsf{U}\ (\mathsf{g} \land \ \mathsf{F}\ \mathsf{m})))$                                                                                  \\
                                                                            & $\texttt{sa}$  & $\varphi_{4}$                   & $\neg \mathsf{m}\ \mathsf{U}\ ((\mathsf{b} \lor \mathsf{y}) \land \ \mathsf{F}\ \mathsf{g})$                                                                                                                                         & $\varphi_{19}$                  & $\neg \mathsf{m}\ \mathsf{U}\ ((\mathsf{b} \lor \mathsf{y}) \land \ \mathsf{F}\ (\neg \mathsf{b}\ \mathsf{U}\ (\mathsf{g} \land \ \mathsf{F}\ \mathsf{m})))$                                                                                  \\
                                                                            & $\texttt{sa}$  & $\varphi_{5}$                   & $\neg (\mathsf{b} \lor \mathsf{y})\ \mathsf{U}\ (\mathsf{g} \land \ \mathsf{F}\ \mathsf{m})$                                                                                                                                         & $\varphi_{20}$                  & $\neg (\mathsf{b} \lor \mathsf{y})\ \mathsf{U}\ (\mathsf{g} \land \ \mathsf{F}\ (\neg \mathsf{g}\ \mathsf{U}\ (\mathsf{m} \land \ \mathsf{F}\ \mathsf{y})))$                                                                                  \\
                                                                            & $\texttt{sa}$  & $\varphi_{6}$                   & $(\ \mathsf{F}\ \mathsf{m}) \land (\neg \mathsf{y}\ \mathsf{U}\ (\mathsf{b} \lor \mathsf{g}))$                                                                                                                                       & $\varphi_{21}$                  & $\neg \mathsf{y}\ \mathsf{U}\ ((\mathsf{b} \lor \mathsf{g}) \land \ \mathsf{F}\ (\neg \mathsf{m}\ \mathsf{U}\ (\mathsf{y} \land \ \mathsf{F}\ \mathsf{b})))$                                                                                  \\
                                                                            & $\texttt{sa}$  & $\varphi_{7}$                   & $\neg \mathsf{g}\ \mathsf{U}\ (\mathsf{y} \land \ \mathsf{F}\ (\mathsf{b} \lor \mathsf{m}))$                                                                                                                                         & $\varphi_{22}$                  & $\neg \mathsf{g}\ \mathsf{U}\ (\mathsf{y} \land \ \mathsf{F}\ (\neg (\mathsf{y} \lor \mathsf{m})\ \mathsf{U}\ (\mathsf{b} \land \ \mathsf{F}\ \mathsf{m})))$                                                                                  \\
                                                                            & $\texttt{sa}$  & $\varphi_{8}$                   & $\neg \mathsf{m}\ \mathsf{U}\ ((\mathsf{g} \lor \mathsf{b}) \land \ \mathsf{F}\ \mathsf{y})$                                                                                                                                         & $\varphi_{23}$                  & $\neg \mathsf{m}\ \mathsf{U}\ ((\mathsf{g} \lor \mathsf{b}) \land \ \mathsf{F}\ (\neg \mathsf{y}\ \mathsf{U}\ (\mathsf{m} \land \ \mathsf{F}\ \mathsf{g})))$                                                                                  \\
                                                                            & $\texttt{sa}$  & $\varphi_{9}$                   & $\neg (\mathsf{g} \lor \mathsf{y})\ \mathsf{U}\ (\mathsf{m} \land \ \mathsf{F}\ \mathsf{b})$                                                                                                                                         & $\varphi_{24}$                  & $\neg (\mathsf{g} \lor \mathsf{y})\ \mathsf{U}\ (\mathsf{m} \land \ \mathsf{F}\ (\neg \mathsf{m}\ \mathsf{U}\ (\mathsf{b} \land \ \mathsf{F}\ \mathsf{y})))$                                                                                  \\
                                                                            & $\texttt{dep}$ & $\varphi_{10}$                  & $(\mathsf{b}_{0}\ \mathsf{Req}\ \mathsf{g}_{1}) \land (\mathsf{b}_{1}\ \mathsf{Req}\ \mathsf{g}_{2}) \land (\mathsf{b}_{2}\ \mathsf{Req}\ \mathsf{m}_{0})$                                                                           & $\varphi_{25}$                  & $(\mathsf{b}_{0}\ \mathsf{Req}\ \mathsf{g}_{1}) \land (\mathsf{b}_{0}\ \mathsf{Req}\ \mathsf{b}_{2}) \land (\mathsf{g}_{1}\ \mathsf{Req}\ \mathsf{g}_{0}) \land (\mathsf{b}_{2}\ \mathsf{Req}\ \mathsf{y}_{1})$                               \\
                                                                            & $\texttt{dep}$ & $\varphi_{11}$                  & $(\mathsf{g}_{0}\ \mathsf{Req}\ \mathsf{m}_{1}) \land (\mathsf{m}_{1}\ \mathsf{Req}\ \mathsf{b}_{2}) \land (\mathsf{m}_{2}\ \mathsf{Req}\ \mathsf{g}_{1})$                                                                           & $\varphi_{26}$                  & $(\mathsf{g}_{1}\ \mathsf{Req}\ \mathsf{g}_{0}) \land (\mathsf{g}_{1}\ \mathsf{Req}\ \mathsf{y}_{2}) \land (\mathsf{g}_{0}\ \mathsf{Req}\ \mathsf{m}_{1}) \land (\mathsf{y}_{2}\ \mathsf{Req}\ \mathsf{m}_{0})$                               \\
                                                                            & $\texttt{dep}$ & $\varphi_{12}$                  & $(\mathsf{b}_{0}\ \mathsf{Req}\ \mathsf{g}_{1}) \land (\mathsf{y}_{1}\ \mathsf{Req}\ \mathsf{m}_{2}) \land (\mathsf{b}_{2}\ \mathsf{Req}\ \mathsf{y}_{0})$                                                                           & $\varphi_{27}$                  & $(\mathsf{y}_{0}\ \mathsf{Req}\ \mathsf{m}_{2}) \land (\mathsf{y}_{0}\ \mathsf{Req}\ \mathsf{m}_{1}) \land (\mathsf{m}_{2}\ \mathsf{Req}\ \mathsf{b}_{0}) \land (\mathsf{m}_{1}\ \mathsf{Req}\ \mathsf{b}_{2})$                               \\
                                                                            & $\texttt{cnt}$ & $\varphi_{13}$                  & $\ \mathsf{Sync}\ [\{0,2\}, \mathsf{y}, 2] \land \ \mathsf{Async}\ [\{0,1,2\}, \mathsf{g}, 3]$                                                                                                                                       & $\varphi_{28}$                  & $\ \mathsf{Sync}\ [\{0,1,2\}, \mathsf{g}, 2] \land \ \mathsf{Async}\ [\{0,1\}, \mathsf{y}, 1] \land \ \mathsf{Async}\ [\{1,2\}, \mathsf{m}, 1]$                                                                                               \\
                                                                            & $\texttt{cnt}$ & $\varphi_{14}$                  & $\ \mathsf{Sync}\ [\{0,1,2\}, \mathsf{m}, 2] \land \ \mathsf{Async}\ [\{0,2\}, \mathsf{y}, 2]$                                                                                                                                       & $\varphi_{29}$                  & $\ \mathsf{Async}\ [\{0,1,2\}, \mathsf{m}, 3] \land \ \mathsf{Sync}\ [\{0,2\}, \mathsf{y}, 1] \land \ \mathsf{Sync}\ [\{0,1,2\}, \mathsf{b}, 2]$                                                                                              \\
                                                                            & $\texttt{cnt}$ & $\varphi_{15}$                  & $\ \mathsf{Sync}\ [\{1,2\}, \mathsf{g}, 2] \land \ \mathsf{Async}\ [\{0,1,2\}, \mathsf{b}, 2]$                                                                                                                                       & $\varphi_{30}$                  & $\ \mathsf{Sync}\ [\{1,2\}, \mathsf{g}, 1] \land \ \mathsf{Async}\ [\{0,1,2\}, \mathsf{b}, 2] \land \ \mathsf{Sync}\ [\{0,1\}, \mathsf{m}, 1]$                                                                                                \\
                                                                            & $\texttt{ma}$  & $\Phi_{\mathrm{indep}}$         & $\{\{\varphi_{1}, \varphi_{2}, \varphi_{3}\}, \{\varphi_{4}, \varphi_{5}, \varphi_{6}\}, \{\varphi_{7}, \varphi_{8}, \varphi_{9}\}\}$                                                                                     & $\Phi_{\mathrm{indep}}$         & $\{\{\varphi_{16}, \varphi_{17}, \varphi_{18}\}, \{\varphi_{19}, \varphi_{20}, \varphi_{21}\}, \{\varphi_{22}, \varphi_{23}, \varphi_{24}\}\}$                                                                                     \\
                                                                            & $\texttt{ma}$  & $\Phi_{\mathrm{indep/dep}}$     & $\{\{\varphi_{1}, \varphi_{2}, \varphi_{3}, \varphi_{10}\}, \{\varphi_{4}, \varphi_{5}, \varphi_{6}, \varphi_{11}\}, \{\varphi_{7}, \varphi_{8}, \varphi_{9}, \varphi_{12}\}\}$                                           & $\Phi_{\mathrm{indep/dep}}$     & $\{\{\varphi_{16}, \varphi_{17}, \varphi_{18}, \varphi_{25}\}, \{\varphi_{19}, \varphi_{20}, \varphi_{21}, \varphi_{26}\}, \{\varphi_{22}, \varphi_{23}, \varphi_{24}, \varphi_{27}\}\}$                                           \\
                                                                            & $\texttt{ma}$  & $\Phi_{\mathrm{indep/cnt}}$     & $\{\{\varphi_{1}, \varphi_{2}, \varphi_{3}, \varphi_{13}\}, \{\varphi_{4}, \varphi_{5}, \varphi_{6}, \varphi_{14}\}, \{\varphi_{7}, \varphi_{8}, \varphi_{9}, \varphi_{15}\}\}$                                           & $\Phi_{\mathrm{indep/cnt}}$     & $\{\{\varphi_{16}, \varphi_{17}, \varphi_{18}, \varphi_{28}\}, \{\varphi_{19}, \varphi_{20}, \varphi_{21}, \varphi_{29}\}, \{\varphi_{22}, \varphi_{23}, \varphi_{24}, \varphi_{30}\}\}$                                           \\
                                                                            & $\texttt{ma}$  & $\Phi_{\mathrm{indep/dep/cnt}}$ & $\{\{\varphi_{1}, \varphi_{2}, \varphi_{3}, \varphi_{10}, \varphi_{13}\}, \{\varphi_{4}, \varphi_{5}, \varphi_{6}, \varphi_{11}, \varphi_{14}\}, \{\varphi_{7}, \varphi_{8}, \varphi_{9}, \varphi_{12}, \varphi_{15}\}\}$ & $\Phi_{\mathrm{indep/dep/cnt}}$ & $\{\{\varphi_{16}, \varphi_{17}, \varphi_{18}, \varphi_{25}, \varphi_{28}\}, \{\varphi_{19}, \varphi_{20}, \varphi_{21}, \varphi_{26}, \varphi_{29}\}, \{\varphi_{22}, \varphi_{23}, \varphi_{24}, \varphi_{27}, \varphi_{30}\}\}$ \\ \midrule
\end{tabular}

}

\begin{minipage}[t]{0.45\linewidth}
\centering

\renewcommand{\arraystretch}{1.225}
\resizebox{\linewidth}{!}{
\begin{tabular}{cccc}
\hline
\multirow{7}{*}{\begin{tabular}[c]{@{}c@{}}Finite-\\ horizon\\ (5 agents)\end{tabular}}  & $\texttt{dep}$ & $\varphi_{46}$                  & $(\mathsf{b}_{0}\ \mathsf{Req}\ \mathsf{g}_{1}) \land (\mathsf{b}_{1}\ \mathsf{Req}\ \mathsf{g}_{2}) \land (\mathsf{g}_{2}\ \mathsf{Req}\ \mathsf{y}_{3}) \land (\mathsf{g}_{3}\ \mathsf{Req}\ \mathsf{g}_{4})$                                                                                                                                                                                                                                                                                                                                                     \\
                                                                                         & $\texttt{dep}$ & $\varphi_{47}$                  & $(\mathsf{g}_{0}\ \mathsf{Req}\ \mathsf{m}_{1}) \land (\mathsf{b}_{1}\ \mathsf{Req}\ \mathsf{b}_{2}) \land (\mathsf{y}_{2}\ \mathsf{Req}\ \mathsf{g}_{3}) \land (\mathsf{b}_{3}\ \mathsf{Req}\ \mathsf{y}_{4})$                                                                                                                                                                                                                                                                                                                                                     \\
                                                                                         & $\texttt{dep}$ & $\varphi_{48}$                  & $(\mathsf{b}_{0}\ \mathsf{Req}\ \mathsf{b}_{1}) \land (\mathsf{g}_{1}\ \mathsf{Req}\ \mathsf{g}_{2}) \land (\mathsf{m}_{2}\ \mathsf{Req}\ \mathsf{y}_{3}) \land (\mathsf{y}_{3}\ \mathsf{Req}\ \mathsf{m}_{4})$                                                                                                                                                                                                                                                                                                                                                     \\
                                                                                         & $\texttt{cnt}$ & $\varphi_{49}$                  & $\mathsf{Async}\ [\{0,1,2,3,4\}, \mathsf{y}, 3] \land \mathsf{Sync}\ [\{0,1,2,3,4\}, \mathsf{b}, 2]$                                                                                                                                                                                                                                                                                                                                                                                                                                                                \\
                                                                                         & $\texttt{cnt}$ & $\varphi_{50}$                  & $\mathsf{Sync}\ [\{0,1,2,3,4\}, \mathsf{y}, 2] \land \mathsf{Async}\ [\{0,1,2,3,4\}, \mathsf{g}, 4]$                                                                                                                                                                                                                                                                                                                                                                                                                                                                \\
                                                                                         & $\texttt{cnt}$ & $\varphi_{51}$                  & $\mathsf{Sync}\ [\{0,1,2,3,4\}, \mathsf{m}, 2] \land \mathsf{Sync}\ [\{0,1,2,3,4\}, \mathsf{g}, 2]$                                                                                                                                                                                                                                                                                                                                                                                                                                                                 \\
                                                                                         & $\texttt{ma}$  & $\Phi_{\mathrm{indep/dep/cnt}}$ & \begin{tabular}[c]{@{}c@{}}$\{\{\varphi_{1}, \varphi_{2}, \varphi_{3}, \varphi_{4}, \varphi_{5}, \varphi_{46}, \varphi_{49}\},$ \\ $\{\varphi_{5}, \varphi_{6}, \varphi_{7}, \varphi_{8}, \varphi_{9}, \varphi_{47}, \varphi_{50}\},$ \\ $\{\varphi_{1}, \varphi_{3}, \varphi_{5}, \varphi_{7}, \varphi_{9}, \varphi_{48}, \varphi_{51}\}\}$\end{tabular}                                                                                                                                                                                                    \\ \hline
\multirow{7}{*}{\begin{tabular}[c]{@{}c@{}}Finite-\\ horizon\\ (10 agents)\end{tabular}} & $\texttt{dep}$ & $\varphi_{52}$                  & \begin{tabular}[c]{@{}c@{}}$(\mathsf{b}_{0}\ \mathsf{Req}\ \mathsf{g}_{1}) \land (\mathsf{b}_{1}\ \mathsf{Req}\ \mathsf{g}_{2}) \land (\mathsf{b}_{2}\ \mathsf{Req}\ \mathsf{m}_{3}) \land (\mathsf{b}_{3}\ \mathsf{Req}\ \mathsf{b}_{4}) \land (\mathsf{g}_{4}\ \mathsf{Req}\ \mathsf{m}_{5})$ \\ $\land (\mathsf{b}_{5}\ \mathsf{Req}\ \mathsf{g}_{6}) \land (\mathsf{b}_{6}\ \mathsf{Req}\ \mathsf{g}_{7}) \land (\mathsf{m}_{7}\ \mathsf{Req}\ \mathsf{b}_{8}) \land (\mathsf{y}_{8}\ \mathsf{Req}\ \mathsf{b}_{9})$\end{tabular}                                 \\
                                                                                         & $\texttt{dep}$ & $\varphi_{53}$                  & \begin{tabular}[c]{@{}c@{}}$(\mathsf{y}_{0}\ \mathsf{Req}\ \mathsf{g}_{1}) \land (\mathsf{y}_{1}\ \mathsf{Req}\ \mathsf{m}_{2}) \land (\mathsf{b}_{2}\ \mathsf{Req}\ \mathsf{b}_{3}) \land (\mathsf{g}_{3}\ \mathsf{Req}\ \mathsf{b}_{4}) \land (\mathsf{b}_{4}\ \mathsf{Req}\ \mathsf{b}_{5})$ \\ $\land (\mathsf{g}_{5}\ \mathsf{Req}\ \mathsf{g}_{6}) \land (\mathsf{y}_{6}\ \mathsf{Req}\ \mathsf{b}_{7}) \land (\mathsf{g}_{7}\ \mathsf{Req}\ \mathsf{g}_{8}) \land (\mathsf{m}_{8}\ \mathsf{Req}\ \mathsf{g}_{9})$\end{tabular}                                 \\
                                                                                         & $\texttt{dep}$ & $\varphi_{54}$                  & \begin{tabular}[c]{@{}c@{}}$(\mathsf{m}_{0}\ \mathsf{Req}\ \mathsf{b}_{1}) \land (\mathsf{m}_{1}\ \mathsf{Req}\ \mathsf{y}_{2}) \land (\mathsf{y}_{2}\ \mathsf{Req}\ \mathsf{b}_{3}) \land (\mathsf{b}_{3}\ \mathsf{Req}\ \mathsf{m}_{4}) \land (\mathsf{b}_{4}\ \mathsf{Req}\ \mathsf{y}_{5})$ \\ $\land (\mathsf{y}_{5}\ \mathsf{Req}\ \mathsf{m}_{6}) \land (\mathsf{m}_{6}\ \mathsf{Req}\ \mathsf{b}_{7}) \land (\mathsf{g}_{7}\ \mathsf{Req}\ \mathsf{y}_{8}) \land (\mathsf{y}_{8}\ \mathsf{Req}\ \mathsf{m}_{9})$\end{tabular}                                 \\
                                                                                         & $\texttt{cnt}$ & $\varphi_{55}$                  & $\mathsf{Async}\ [\{0,1,2,3,4,5,6,7,8,9\}, \mathsf{b}, 9]$                                                                                                                                                                                                                                                                                                                                                                                                                                                                                                           \\
                                                                                         & $\texttt{cnt}$ & $\varphi_{56}$                  & $\mathsf{Async}\ [\{0,1,2,3,4,5,6,7,8,9\}, \mathsf{m}, 7]$                                                                                                                                                                                                                                                                                                                                                                                                                                                                                                           \\
                                                                                         & $\texttt{cnt}$ & $\varphi_{57}$                  & $\mathsf{Async}\ [\{0,1,2,3,4,5,6,7,8,9\}, \mathsf{g}, 9]$                                                                                                                                                                                                                                                                                                                                                                                                                                                                                                           \\
                                                                                         & $\texttt{ma}$  & $\Phi_{\mathrm{indep/dep/cnt}}$ & \begin{tabular}[c]{@{}c@{}}$\{\{\varphi_{3}, \varphi_{8}, \varphi_{1}, \varphi_{6}, \varphi_{4}, \varphi_{9}, \varphi_{2}, \varphi_{5}, \varphi_{7}, \varphi_{3}, \varphi_{52}, \varphi_{55}\},$ \\ $\{\varphi_{7}, \varphi_{2}, \varphi_{9}, \varphi_{4}, \varphi_{1}, \varphi_{6}, \varphi_{8}, \varphi_{3}, \varphi_{5}, \varphi_{8}, \varphi_{53}, \varphi_{56}\},$ \\ $\{\varphi_{5}, \varphi_{1}, \varphi_{7}, \varphi_{3}, \varphi_{9}, \varphi_{2}, \varphi_{6}, \varphi_{4}, \varphi_{8}, \varphi_{1}, \varphi_{54}, \varphi_{57}\}\}$\end{tabular} \\ \Xhline{1.8pt}
\end{tabular}
}

\end{minipage}
\hfill
\begin{minipage}[t]{0.53\linewidth}
\centering

\renewcommand{\arraystretch}{1.2}
\resizebox{\linewidth}{!}{
\begin{tabular}{cccc}
\hline
\multirow{19}{*}{\begin{tabular}[c]{@{}c@{}}Infinite-\\ horizon\end{tabular}} & $\texttt{sa}$  & $\varphi_{31}$                  & $(\always\event \mathsf{b}) \land (\always\event \mathsf{g}) \land (\always \neg \mathsf{y})$                                                                                                                                                 \\
                                                                              & $\texttt{sa}$  & $\varphi_{32}$                  & $(\always\event \mathsf{y}) \land (\always\event \mathsf{g}) \land (\always \neg \mathsf{b})$                                                                                                                                                 \\
                                                                              & $\texttt{sa}$  & $\varphi_{33}$                  & $(\always\event \mathsf{m}) \land (\always\event \mathsf{b}) \land (\always \neg \mathsf{g})$                                                                                                                                                 \\
                                                                              & $\texttt{sa}$  & $\varphi_{34}$                  & $(\always\event \mathsf{y}) \land (\always\event \mathsf{m}) \land (\always \neg \mathsf{g})$                                                                                                                                                 \\
                                                                              & $\texttt{sa}$  & $\varphi_{35}$                  & $(\always\event \mathsf{b}) \land (\always\event \mathsf{m}) \land (\always \neg \mathsf{y})$                                                                                                                                                 \\
                                                                              & $\texttt{sa}$  & $\varphi_{36}$                  & $(\always\event \mathsf{g}) \land (\always\event \mathsf{y}) \land (\always \neg \mathsf{b})$                                                                                                                                                 \\
                                                                              & $\texttt{sa}$  & $\varphi_{37}$                  & $(\always\event \mathsf{g}) \land (\always\event \mathsf{b}) \land (\always \neg \mathsf{m})$                                                                                                                                                 \\
                                                                              & $\texttt{sa}$  & $\varphi_{38}$                  & $(\always\event \mathsf{m}) \land (\always\event \mathsf{y}) \land (\always \neg \mathsf{g})$                                                                                                                                                 \\
                                                                              & $\texttt{sa}$  & $\varphi_{39}$                  & $(\always\event \mathsf{y}) \land (\always\event \mathsf{b}) \land (\always \neg \mathsf{m})$                                                                                                                                                 \\
                                                                              & $\texttt{dep}$ & $\varphi_{40}$                  & $(\mathsf{b}_{0}\ \mathsf{EachReq}\ \mathsf{g}_{1}) \land (\mathsf{y}_{1}\ \mathsf{EachReq}\ \mathsf{b}_{2})$                                                                                                                                 \\
                                                                              & $\texttt{dep}$ & $\varphi_{41}$                  & $(\mathsf{y}_{0}\ \mathsf{EachReq}\ \mathsf{m}_{1}) \land (\mathsf{b}_{1}\ \mathsf{EachReq}\ \mathsf{y}_{2})$                                                                                                                                 \\
                                                                              & $\texttt{dep}$ & $\varphi_{42}$                  & $(\mathsf{g}_{0}\ \mathsf{EachReq}\ \mathsf{y}_{1}) \land (\mathsf{m}_{1}\ \mathsf{EachReq}\ \mathsf{b}_{2})$                                                                                                                                 \\
                                                                              & $\texttt{cnt}$ & $\varphi_{43}$                  & $\neg \ \mathsf{Async}\ [\{0,1,2\}, \mathsf{m}, 2]$                                                                                                                                                                                           \\
                                                                              & $\texttt{cnt}$ & $\varphi_{44}$                  & $\neg \ \mathsf{Sync}\ [\{0,1,2\}, \mathsf{b}, 2]$                                                                                                                                                                                            \\
                                                                              & $\texttt{cnt}$ & $\varphi_{45}$                  & $\neg \ \mathsf{Async}\ [\{0,1,2\}, \mathsf{g}, 2]$                                                                                                                                                                                           \\
                                                                              & $\texttt{ma}$  & $\Phi_{\mathrm{indep}}$         & $\{\{\varphi_{31}, \varphi_{32}, \varphi_{33}\}, \{\varphi_{34}, \varphi_{35}, \varphi_{36}\}, \{\varphi_{37}, \varphi_{38}, \varphi_{39}\}\}$                                                                                     \\
                                                                              & $\texttt{ma}$  & $\Phi_{\mathrm{indep/dep}}$     & $\{\{\varphi_{31}, \varphi_{32}, \varphi_{33}, \varphi_{40}\}, \{\varphi_{34}, \varphi_{35}, \varphi_{36}, \varphi_{41}\}, \{\varphi_{37}, \varphi_{38}, \varphi_{39}, \varphi_{42}\}\}$                                           \\
                                                                              & $\texttt{ma}$  & $\Phi_{\mathrm{indep/cnt}}$     & $\{\{\varphi_{31}, \varphi_{32}, \varphi_{33}, \varphi_{43}\}, \{\varphi_{34}, \varphi_{35}, \varphi_{36}, \varphi_{44}\}, \{\varphi_{37}, \varphi_{38}, \varphi_{39}, \varphi_{45}\}\}$                                           \\
                                                                              & $\texttt{ma}$  & $\Phi_{\mathrm{indep/dep/cnt}}$ & $\{\{\varphi_{31}, \varphi_{32}, \varphi_{33}, \varphi_{40}, \varphi_{43}\}, \{\varphi_{34}, \varphi_{35}, \varphi_{36}, \varphi_{41}, \varphi_{44}\}, \{\varphi_{37}, \varphi_{38}, \varphi_{39}, \varphi_{42}, \varphi_{45}\}\}$ \\ \Xhline{1.8pt}
\end{tabular}
}

\end{minipage}

\end{table}

\section{Implementation Details}
\label{app:implementation}

We use the official codebase of the baseline ACC-MARL\footnote{\url{https://github.com/rad-dfa/acc-marl}}. 
The low-level policy used in our framework is adapted from GenZ-LTL\footnote{\url{https://github.com/BU-DEPEND-Lab/GenZ-LTL}} by extending it to the multi-agent setting, while we implement the proposed high-level sampling-based planner. 
For a fair comparison, all methods use the same network architectures: the actor and critic are fully connected networks with three hidden layers of size $[64,64,64]$, and the value function network has two hidden layers of size $[64,64]$. 
We retain the original state/observation preprocessing of each method. 
All methods use the \texttt{Adam} optimizer with a learning rate of $3\times10^{-4}$ and are trained for 15M environment interactions. 
The discount factor is set to $\gamma=0.998$ for \texttt{ZoneEnv}. 
Additional hyperparameter details are provided in the code included in the supplementary materials. 
All methods are trained on a 16-core AMD CPU and an NVIDIA GeForce RTX 4090 GPU, with our model taking approximately 6 hours to train.

\textbf{Training and evaluation.}
For each agent $i$, we construct a feasible reach-avoid subtask set by considering every assignment $\alpha_i\in 2^{AP_i}$ as a candidate reach condition $A_i^+$.
For each candidate, assignments that conflict with $A_i^+$ are discarded, and all combinations of the remaining assignments are enumerated to construct the avoid condition $A_i^-$.
This yields the subtask set $\xi_i=\{(A_{i,m}^+,A_{i,m}^-)\}_{m=1}$.
During training, a reach-avoid subtask $\sigma_i$is uniformly sampled from the corresponding $\xi_i$ for each agent.
We define a binary reward $r_i=1$ when the current true propositions, obtained according to $L_i$, satisfy the reach condition of $\sigma_i$, and $r_i=0$ otherwise.
Similarly, we define a binary cost $c_i=1$ when the current true propositions satisfy the avoid condition of $\sigma_i$ or agent $i$ collides with another agent, and $c_i=0$ otherwise.
If any $c_i>0$, the episode is terminated immediately, since the corresponding specification can no longer be satisfied.
Otherwise, whenever $r_i=1$, a new subtask is re-sampled for agent $i$.
We train GenZ-LTL using the rewards/costs collected during interactions, and the policy and critics are shared across agents because the agents are homogeneous in the \texttt{Zone} environment.
The 5 seeds used for training and evaluation are $[1,11,22,33,44]$.
During evaluation, the agent and zone positions are randomly sampled for each episode, while the same sampled configurations are used across different methods under each seed for fair comparison.

\textbf{Specification satisfaction evaluation.}
For each rollout, we evaluate its trace against every specification component $\varphi_i$, $i\in\mathcal I$.
For finite-horizon tasks, a trajectory is considered successful only if all specification components are satisfied, whereas it is considered a violation if any component is violated.
For infinite-horizon tasks, violation is determined in the same way.
For each non-violating trajectory, we count the visits to accepting states for every specification component and take the minimum of these counts as the number of accepting-state visits reported in Table~\ref{tab:main-results-infinite}.

\textbf{Product automaton.}
We use the Spot\footnote{\url{https://spot.lre.epita.fr/}} library to construct an automaton for each specification component $\varphi_i$ and the corresponding product automaton for the global specification $\varphi_{\texttt{global}}=\bigwedge_{i\in\mathcal I}\varphi_i$.
We then record the number of states in the resulting global automaton.
For the specifications in Section~\ref{sec:scale-agent-number}, we perform this analysis for $\Phi_{\texttt{ind/dep/cnt}}$ with $N\in\{3,5,10\}$ and report the mean and standard deviation of the automaton-state counts in Table~\ref{tab:planner-scalability}.
For $N=10$, the complete product automata could not be constructed within tractable computational limits (1 hour limit), as each global specification contains 12 component specifications, including 10 individual-agent specifications together with one $\texttt{dep}$ and one $\texttt{cnt}$ specification with $|AP|=40$.
We therefore report the largest observed state number as a lower bound, denoted by $>{\rm num}$ in Table~\ref{tab:planner-scalability}.

\textbf{Details of the sampling-based planner.}
For each $\varphi_i$, $i \in \mathcal{I}$, let $\mathcal{G}_i=\{g_i^1,g_i^2,\ldots,g_i^{M_i}\}$ denote the set of feasible subtask sequences, where $M_i$ is the number of feasible subtask sequences for $\varphi_i$. 
Each $g_i^j\in\mathcal G_i$ is a sequence $g_i^j=[\sigma_{i,1}^j,\sigma_{i,2}^j,\ldots]$, where $\sigma_{i,k}^j=(A_{i,k}^{+,j},A_{i,k}^{-,j})$ denotes a reach-avoid subtask.
We assume without loss of generality that each reach condition $A_{i,k}^{+,j}$ and avoid condition $A_{i,k}^{-,j}$ is expressed in disjunctive normal form (DNF)~\citep{rosen1999discrete}, where each conjunctive clause is represented by the set of atomic propositions it contains; for example, $(a\land b)\lor(c\land d)$ is represented as $\{\{a,b\},\{c,d\}\}$.
Algorithm~\ref{algo:subtask-extend} extends a sampled node by finding compatible next subtasks.
We first identify the expandable components $\mathcal I_{\mathrm{exp}}$ such that, for each $i\in\mathcal I_{\mathrm{exp}}$, none of its subtask sequences has reached its end, and then sample a nonempty subset $\mathcal I_{\mathrm{s}}\subseteq\mathcal I_{\mathrm{exp}}$.
Now we explain the main loop of Algorithm~\ref{algo:subtask-extend} (line~\ref{algo-line:extend-loop}).
For each $i\in\mathcal I_{\mathrm{s}}$, if no subtask sequence has been selected, we uniformly sample $g_i^j\in\mathcal G_i$, retrieve its first subtask, and update the corresponding tuple to $(i,j,1)$.
Otherwise, given the tuple $(i,j,k)$ stored in $x_{\mathrm{parent}}$, we retrieve the next subtask $\sigma_{i,k+1}^j$ and update the corresponding tuple to $(i,j,k+1)$.

We then check whether the resulting node is valid by verifying the compatibility of the selected subtasks based on their reach and avoid conditions.
For each component $i\in\mathcal I_{\mathrm{s}}$ whose current subtask is $\sigma_{i,k}^j=(A_{i,k}^{+,j},A_{i,k}^{-,j})$, one conjunctive clause is sampled from $A_{i,k}^{+,j}$ and denoted by $\tilde A_i^+$.
Thus $\tilde A_i^+=\emptyset$ for $i\in\mathcal I\setminus\mathcal I_{\mathrm{s}}$ since no new reach condition is considered in the current extension.
The centralized planner then constructs the joint reach condition $A_{\mathrm{joint}}^+ := \bigcup_{i\in\mathcal I}\tilde A_i^+$.
To construct the joint avoid condition, we consider every component $i\in\mathcal I$.
For $i\in\mathcal I_{\mathrm{s}}$, we use the avoid condition of the newly retrieved subtask.
For $i\in\mathcal I\setminus\mathcal I_{\mathrm{s}}$, given its stored tuple $(i,j,k)$, if $j=0$, we uniformly sample a sequence $g_i^j\in\mathcal G_i$ and use the avoid condition of its first subtask; otherwise, we use the avoid condition of the current subtask $\sigma_{i,k}^j$.
Let $\tilde A_i^-$ denote the resulting avoid condition for component $i$.
Since each $\tilde A_i^-$ is represented in DNF as a set of conjunctive clauses, with each clause itself represented as a set of atomic propositions, the joint avoid condition is also a set of such clauses and is defined as
$A_{\mathrm{joint}}^-:=\bigcup_{i\in\mathcal I}\tilde A_i^-$.
The joint reach and avoid conditions are incompatible if any conjunctive clause $B\in A^-$ satisfies $B\subseteq A^+$, since satisfying the reach condition would necessarily violate the avoid condition.
Otherwise, propositions already contained in $A^+$ are removed from each avoid clause, yielding the final avoid condition $\hat A^-:=\{B\setminus A^+\mid B\in A^-\}$.
For example, if $A^+=\{a,b\}$ and $A^-=\{\{a,b,c\},\{d\}\}$, the conditions are compatible and reduce to $A^+=\{a,b\}$ and $\hat A^-=\{\{c\},\{d\}\}$.
In contrast, if $A^+=\{a,b\}$ and $A^-=\{\{a\},\{c\}\}$, they are incompatible because $\{a\}\subseteq A^+$.
This compatibility check depends only on the Boolean structure of the reach and avoid conditions.

\textbf{Replanning and value-guided sampling.}
During execution, we evaluate the current active propositions $L(s)$ and use them to update the automaton state of each specification component.
If none of the automaton states changes, the agents continue following the current plan.
Whenever the automaton state of at least one component changes, we recompute the feasible reach-avoid subtask sequences from the updated automaton states and invoke the planner again to generate a new plan from the current specification progress.
For infinite-horizon specifications, we construct each candidate subtask sequence from its finite prefix followed by $n_{\mathrm{cyc}}$ repetitions of its cyclic suffix, and apply the same planning and replanning procedure to these finite sequences.
On the other hand, at test time, learned value functions can in principle be used to guide subtask selection, e.g., by choosing the candidate subtask $\sigma$ with the largest value $V(s,\sigma)$ at the current state $s$.
However, evaluating a multi-step plan also requires estimating the values of subsequent subtasks at future states that have not yet been observed.
One possible approach is to train an additional model to predict a future state $\tilde{s}$ and then evaluate $V(\tilde{s},\sigma')$ for the next subtask $\sigma'$.
However, the planning horizon can be long, and prediction errors can accumulate over multiple subtasks, making accurate future-state estimates difficult to obtain.
Therefore, value-function-based lookahead becomes challenging for long-horizon task allocation considered in our setting.
Moreover, under partial observability, optimizing the planned subtask sequence is challenging, and value-guided sampling does not offer an obvious advantage.
We leave a systematic investigation of this direction to future work.

\section{Additional Results}
\label{app:additional-results}

\textbf{Performance on simple specifications.}
We additionally evaluate our method and ACC-MARL on simple specifications consisting of reach-only tasks, i.e., $\event \alpha$, and reach-avoid tasks, i.e., $\neg \alpha \until \beta$, where $\alpha,\beta\in AP$ and $\alpha\neq\beta$.
The results are shown in Table~\ref{tab:simple-specs}.
Note that for reach-only specifications, violations arise solely from inter-agent collisions.
ACC-MARL achieves substantially higher success rates on these simple specifications than on the more complex specifications considered in the main experiments, suggesting that its performance degradation is mainly due to the increased specification complexity.
In contrast, our method maintains consistently high success rates and low violation rates across both simple and complex specifications.

\begin{table}[ht]
\centering
\caption{
Evaluation results on simple specifications: success rate $\eta_s$, violation rate $\eta_v$, and average number of steps $\mu$.
$\uparrow$: higher is better; $\downarrow$: lower is better.
Each value is averaged over 5 seeds, with 100 trajectories per seed.
}
\label{tab:simple-specs}
\renewcommand{\arraystretch}{1.2}
\resizebox{0.6\linewidth}{!}{
\begin{tabular}{ccccc}
\toprule
Method                    & Specifications & $\eta_s \uparrow$ & $\eta_v \downarrow$ & $\mu \downarrow$    \\ \midrule
\multirow{2}{*}{ACC-MARL} & Reach-Only     & $0.77_{\pm0.08}$  & $0.19_{\pm0.07}$    & $261.27_{\pm39.58}$ \\
                          & Reach-Avoid    & $0.60_{\pm0.09}$  & $0.37_{\pm0.08}$    & $226.06_{\pm38.71}$ \\ \midrule
\multirow{2}{*}{Ours}     & Reach-Only     & $0.97_{\pm0.02}$  & $0.02_{\pm0.02}$    & $139.92_{\pm14.16}$ \\
                          & Reach-Avoid    & $0.98_{\pm0.02}$  & $0.01_{\pm0.01}$    & $148.40_{\pm15.47}$ \\ \bottomrule
\end{tabular}
}
\end{table}

\textbf{Infinite-horizon tasks.}
We also evaluate infinite-horizon tasks, i.e., specifications whose satisfaction requires infinite-length trajectories.
We focus on specifications involving $\always\event\varphi$, which induce continual navigation and inter-agent interactions, rather than $\event\always\varphi$, where agents remain in specific regions after reaching them.
ACC-MARL is the only existing baseline applicable to our setting, but it is excluded from this evaluation because DFAs cannot represent infinite-horizon specifications.
We therefore report the performance of our method without baseline comparison, providing a reference for future evaluation when suitable baselines become available.
The evaluated specifications are listed in Table~\ref{tab:complex-specs}, with results reported in Table~\ref{tab:main-results-infinite}.
We report the violation rate $\eta_v$ and the average number of visits to accepting states among non-violating trajectories, $\mu_{\texttt{acc}}$.
Our method naturally extends to this setting, as the same sampling-based planning procedure can be repeatedly applied to the subtask sequences obtained from each specification.
The learned policy is conditioned only on the current subtask and can therefore be directly reused throughout infinite-horizon execution.
Our method can obtain low violation rates and frequent visits to accepting states across all evaluated specifications, although $\mu_{\texttt{acc}}$ decreases and $\eta_v$ increases slightly as dependency and count requirements are introduced due to the increased coordination difficulty.

\begin{table*}[ht]
\centering
\caption{
Evaluation results of infinite-horizon tasks: violation rate $\eta_v$ and average number of visits to
accepting states $\mu_{\text{acc}}$ on infinite-horizon tasks listed in Table~\ref{tab:complex-specs}. 
$\uparrow$: higher is better; $\downarrow$: lower is better.
Each value is averaged over 5 seeds, with 100 trajectories per seed.
}
\vspace{-1mm}
\label{tab:main-results-infinite}
\renewcommand{\arraystretch}{1.1}
\resizebox{0.75\linewidth}{!}{
\begin{tabular}{cccccccc}
\Xhline{1.2pt}
Method                                                                          & Metrics                       & $\Phi_{\mathrm{indep}}$    & $\Phi_{\mathrm{indep/req}}$ & $\Phi_{\mathrm{indep/cnt}}$ & $\Phi_{\mathrm{indep/req/cnt}}$ & Average                   \\ \hline
\multirow{2}{*}{Ours}   & $\mu_{\texttt{acc}} \uparrow$ & $10.71_{\pm1.86}$ & $5.98_{\pm0.87}$   & $10.48_{\pm1.72}$  & $5.55_{\pm0.84}$       & $8.18_{\pm2.80}$ \\
                                                                                & $\eta_v \downarrow$           & $0.05_{\pm0.03}$           & $0.15_{\pm0.03}$   & $0.05_{\pm0.02}$   & $0.16_{\pm0.04}$                & $0.10_{\pm0.06}$ \\

                          \Xhline{1.2pt}
\end{tabular}
}
\end{table*}

\begin{figure}[ht]
    \centering
    \includegraphics[width=1.0\linewidth]{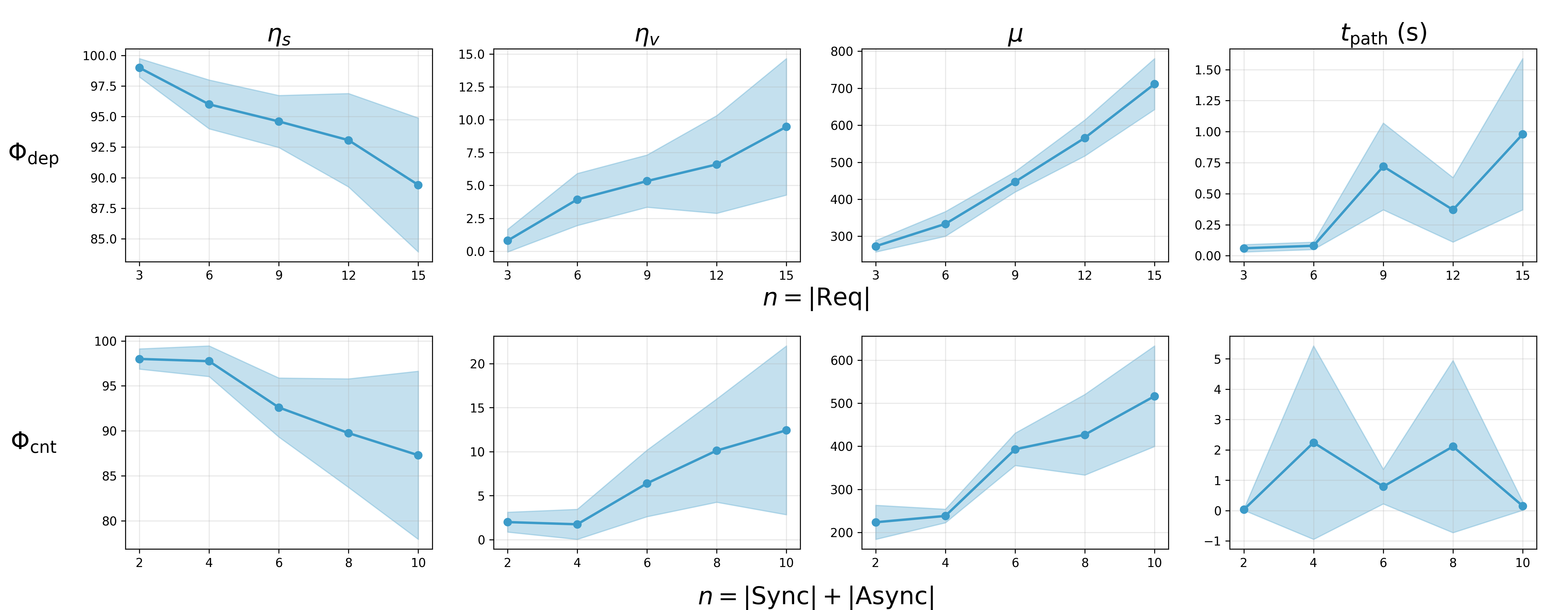}
    \caption{
    Performance as the complexity of inter-agent coordination specifications increases.
    The top row varies the number of dependency operators $n=|\texttt{Req}|$ in $\varphi_{\texttt{dep}}$, while the bottom row varies the number of count operators $n=|\texttt{Sync}|+|\texttt{Async}|$ in $\varphi_{\texttt{cnt}}$.
    We report the success rate $\eta_s$, violation rate $\eta_v$, average number of steps $\mu$ to satisfy a specification, and planning time $t_{\texttt{path}}$.
    Solid lines show the mean, and shaded regions indicate one standard deviation.
    }
    \label{fig:req_cnt_complecity}
\end{figure}

\textbf{Coordination-specification complexity.}
We also evaluate how performance changes as the complexity of the inter-agent coordination specifications increases in the 3-agent setting.
To isolate the effects of $\varphi_{\texttt{dep}}$ and $\varphi_{\texttt{cnt}}$, each agent is assigned a simple individual task of the form $\event\alpha$, while the coordination-specification complexity is varied separately.
For dependency requirements, we construct $\varphi_{\texttt{dep}}$ as a conjunction of $\texttt{Req}$ clauses and increase $n=|\texttt{Req}|$, introducing more inter-agent dependency relations.
Similarly, for count requirements, we construct $\varphi_{\texttt{cnt}}$ as a conjunction of $\texttt{Sync}$ and $\texttt{Async}$ clauses and increase $n=|\texttt{Sync}|+|\texttt{Async}|$, introducing more count requirements.
Although dependency and count requirements can be combined within the same global specification, we vary them separately here to isolate the effect of each type of coordination complexity.
As shown in Figure~\ref{fig:req_cnt_complecity}, increasing either type of specification complexity generally decreases the success rate $\eta_s$ and increases the violation rate $\eta_v$ and the number of steps $\mu$ required to satisfy the specification, as agents must satisfy a larger number of inter-agent coordination requirements.
Meanwhile, the planning time $t_{\texttt{path}}$ remains tractable as the number of coordination operators increases, since the sampling-based planner operates directly on subtask sequences without explicitly constructing the product automaton.

\section{Qualitative Results}
\label{app:qualitative-results}

We conduct a qualitative analysis of our method using specification $\{\varphi_4,\varphi_5,\varphi_6,\varphi_{11},\varphi_{14}\}$, with the visualization of the trajectories shown in Figure~\ref{fig:qualitative-results} and the planning process of the high-level planner in Figure~\ref{fig:rrt_planner}.
In natural language, the specifications mean that $\texttt{agent}_0$ is required to avoid the $\mathsf{magenta}$ region until reaching either the $\mathsf{blue}$ or $\mathsf{yellow}$ region, followed by reaching the $\mathsf{green}$ region; $\texttt{agent}_1$ must avoid the blue and yellow regions until reaching the green region, followed by the magenta region; and $\texttt{agent}_2$ must reach the magenta region and avoid the yellow region until reaching either the blue or green region.
The $\texttt{dep}$ specification requires $\texttt{agent}_1$ to reach the magenta region before $\texttt{agent}_0$ reaches the green region, $\texttt{agent}_2$ to reach the blue region before $\texttt{agent}_1$ reaches the magenta region, and $\texttt{agent}_1$ to reach the green region before $\texttt{agent}_2$ reaches the magenta region.
The $\texttt{cnt}$ specification requires that at least two agents must reach the magenta region simultaneously, and both $\texttt{agent}_0$ and $\texttt{agent}_2$ must reach the yellow region, without requiring simultaneous visits.

\begin{figure}[ht]
    \centering
    \captionsetup[subfigure]{
        font=scriptsize,
        justification=centering,
        singlelinecheck=false
    }

    \begin{subfigure}{0.23\textwidth}
        \centering
        \includegraphics[width=\textwidth, trim={0 0 0 60}, clip]
        {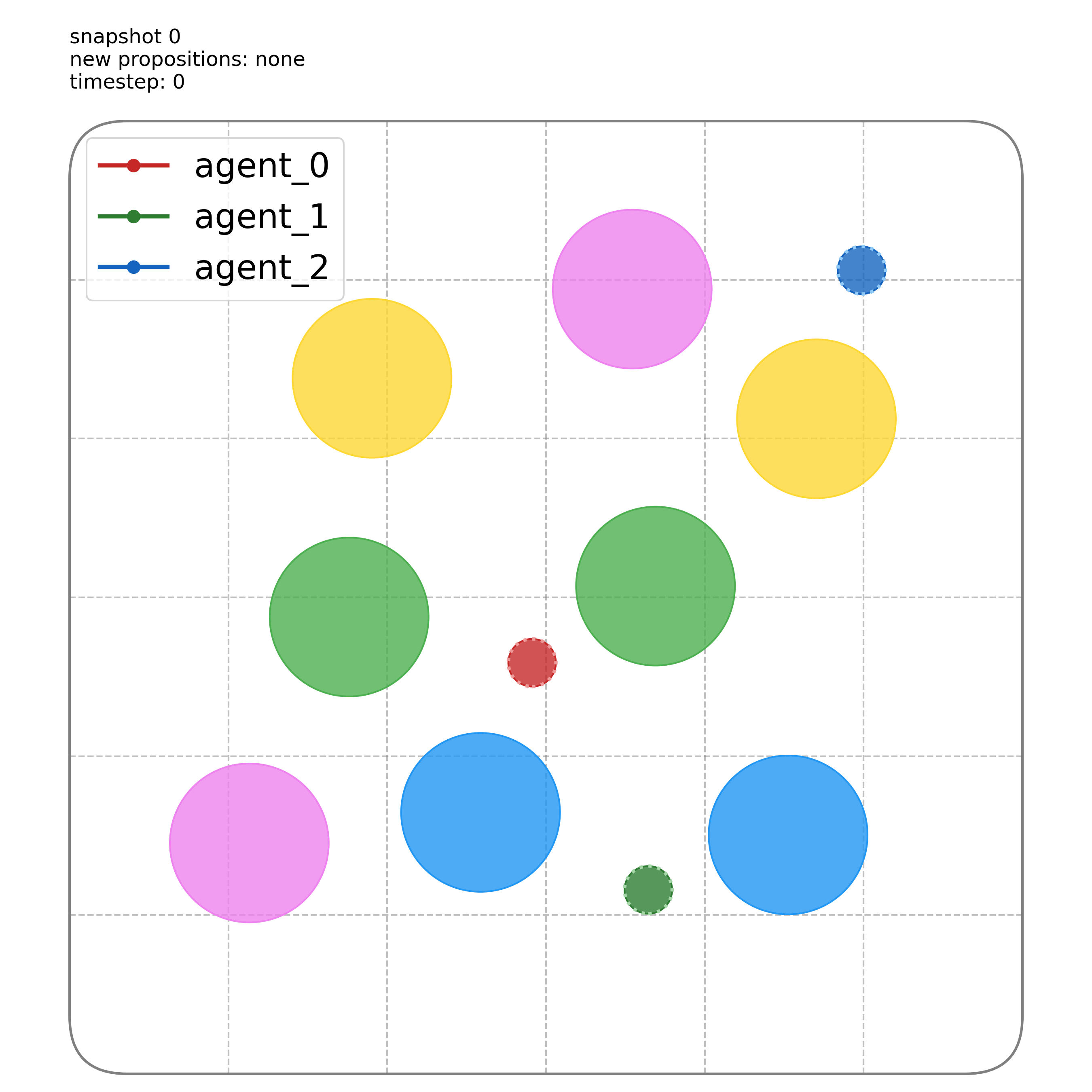}
        \caption{$t=0$\protect\\$\{\}$}
    \end{subfigure}
    \hspace{0.01\textwidth}
    \begin{subfigure}{0.23\textwidth}
        \centering
        \includegraphics[width=\textwidth, trim={0 0 0 60}, clip]
        {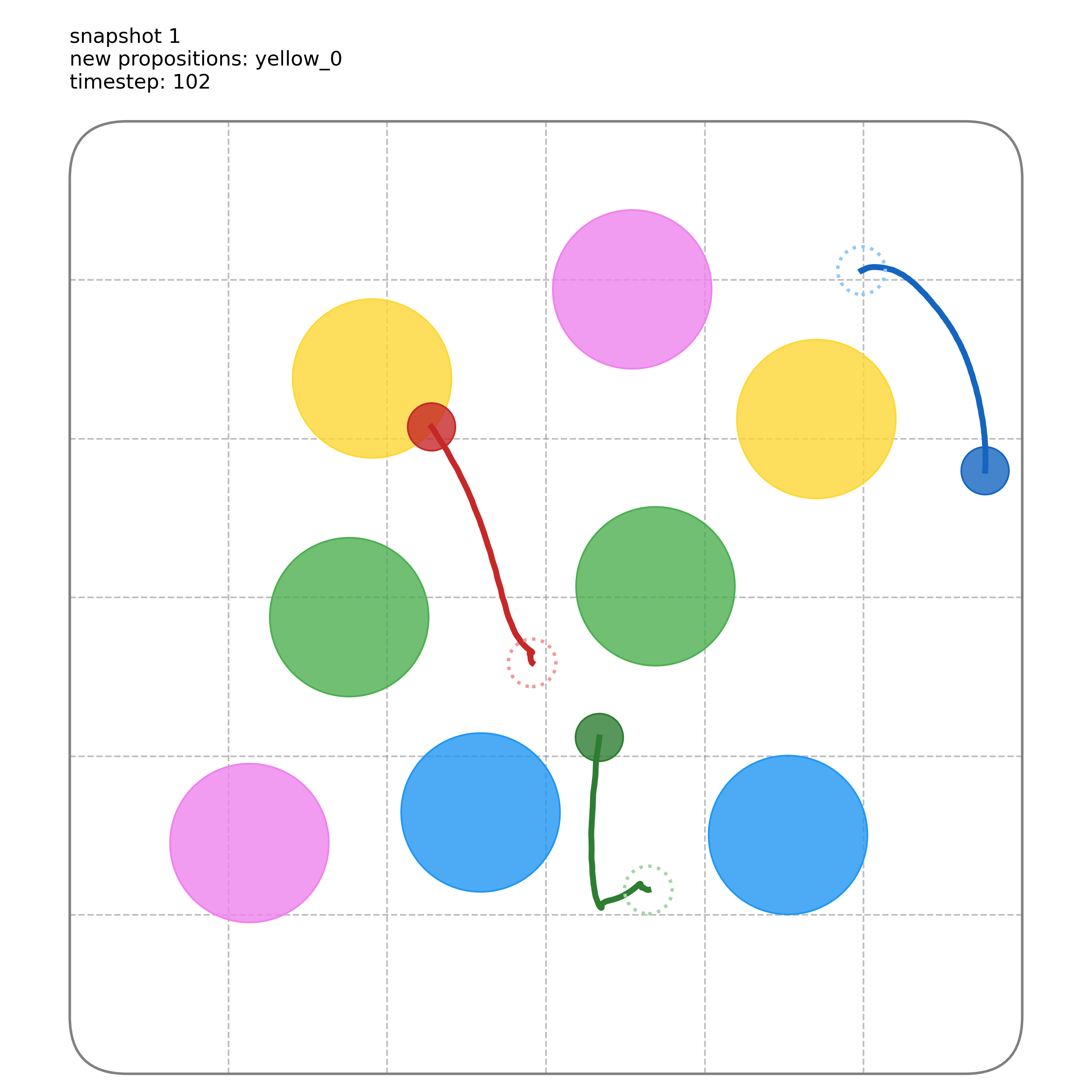}
        \caption{$t=102$\protect\\$\{\mathsf{yellow}_0\}$}
    \end{subfigure}
    \hspace{0.01\textwidth}
    \begin{subfigure}{0.23\textwidth}
        \centering
        \includegraphics[width=\textwidth, trim={0 0 0 60}, clip]
        {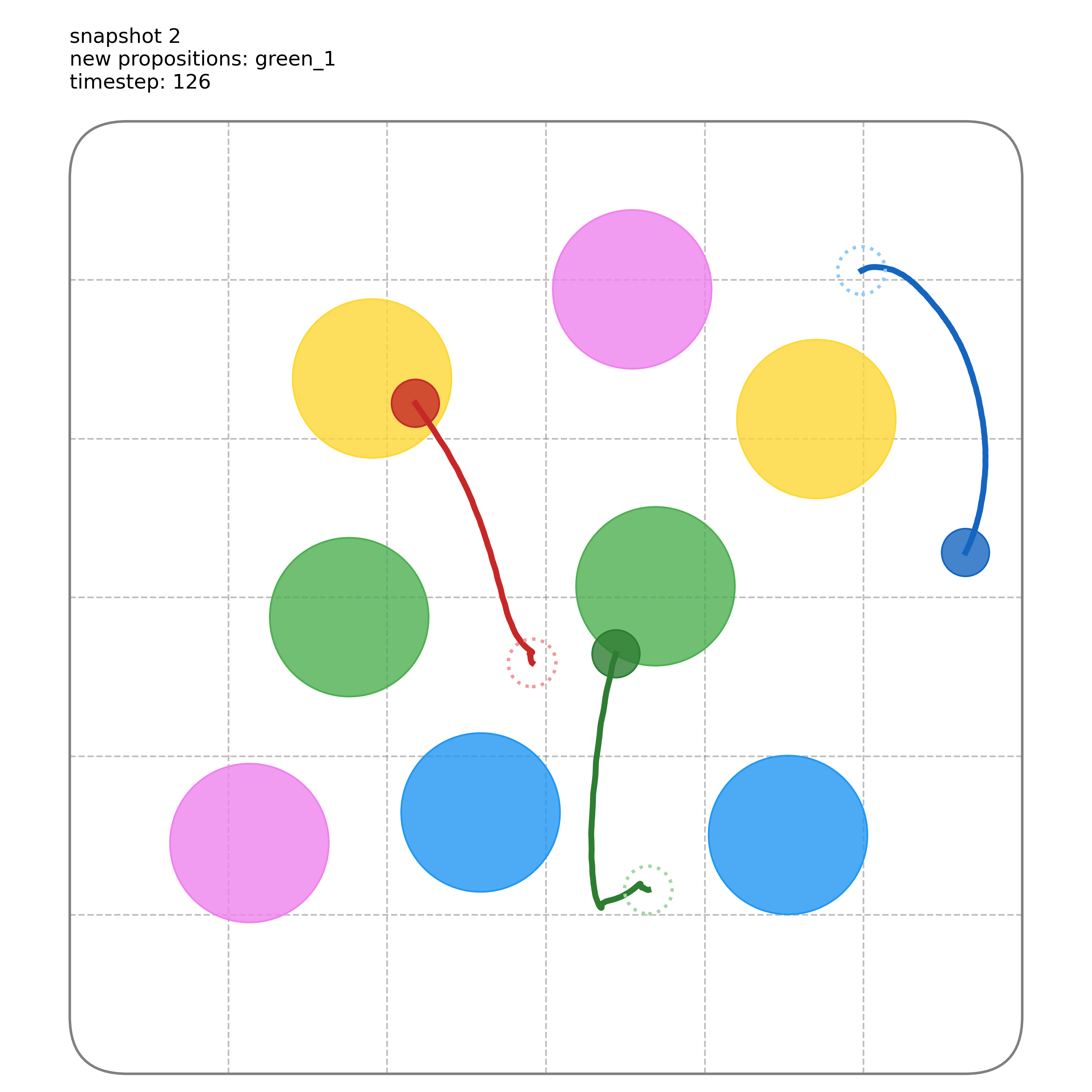}
        \caption{$t=126$\protect\\$\{\mathsf{yellow}_0,\mathsf{green}_1\}$}
    \end{subfigure}
    \hspace{0.01\textwidth}
    \begin{subfigure}{0.23\textwidth}
        \centering
        \includegraphics[width=\textwidth, trim={0 0 0 60}, clip]
        {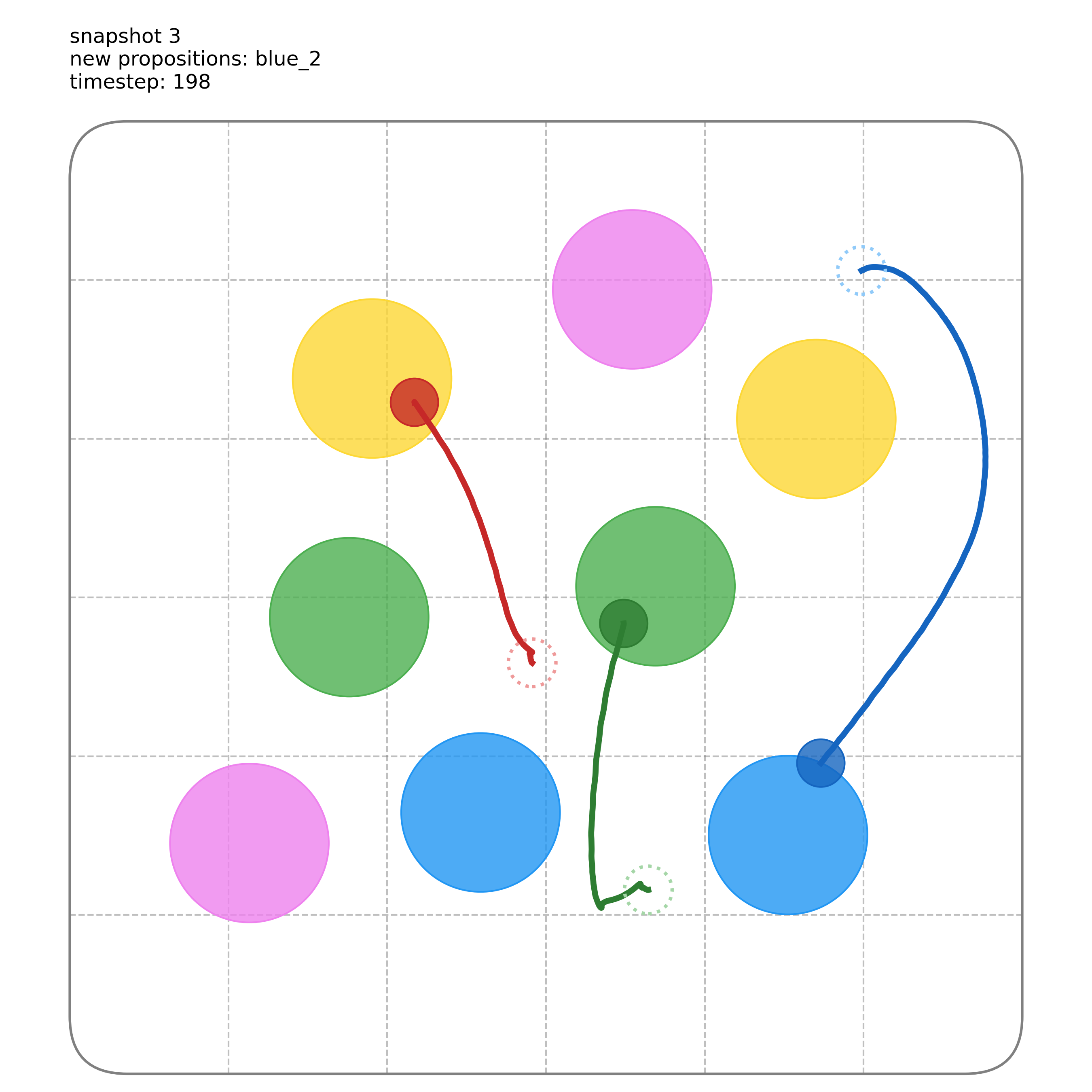}
        \caption{$t=198$\protect\\$\{\mathsf{yellow}_0,\mathsf{green}_1,\mathsf{blue}_2\}$}
    \end{subfigure}

    \vspace{0.2cm}

    \begin{subfigure}{0.23\textwidth}
        \centering
        \includegraphics[width=\textwidth, trim={0 0 0 60}, clip]
        {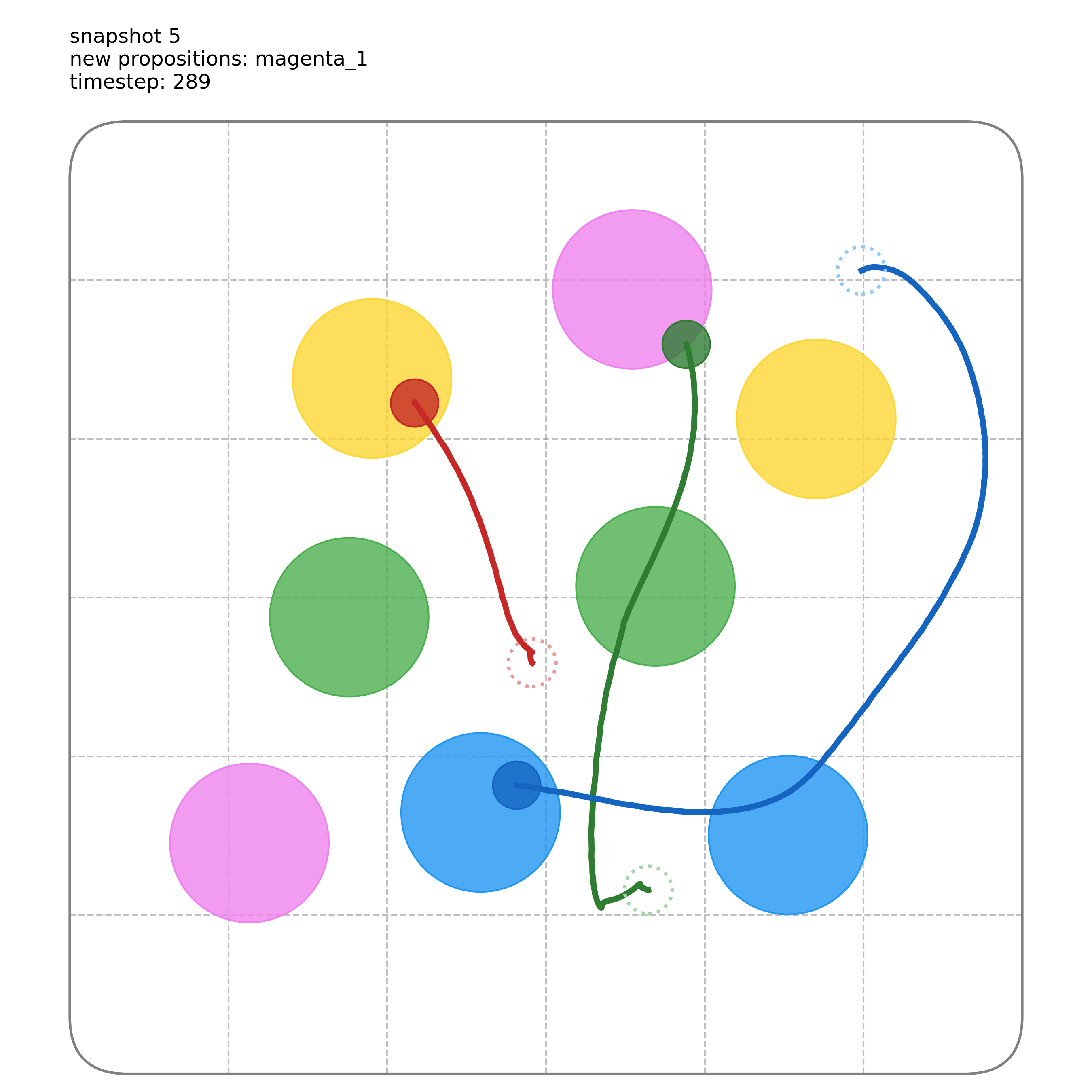}
        \caption{$t=289$\protect\\$\{\mathsf{yellow}_0,\mathsf{magenta}_1,\mathsf{blue}_2\}$}
    \end{subfigure}
    \hspace{0.01\textwidth}
    \begin{subfigure}{0.23\textwidth}
        \centering
        \includegraphics[width=\textwidth, trim={0 0 0 60}, clip]
        {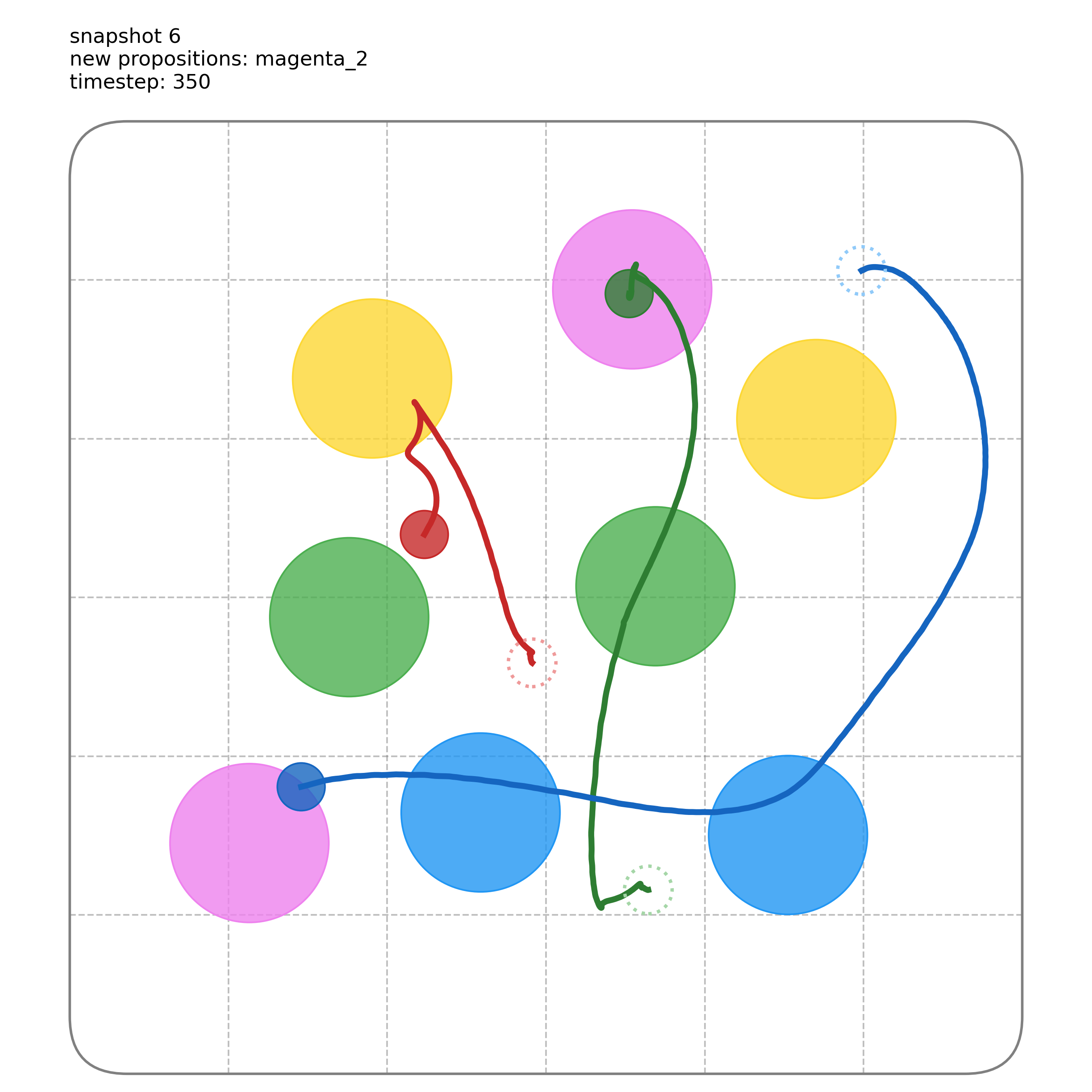}
        \caption{$t=349$\protect\\$\{\mathsf{magenta}_1,\mathsf{magenta}_2\}$}
    \end{subfigure}
    \hspace{0.01\textwidth}
    \begin{subfigure}{0.23\textwidth}
        \centering
        \includegraphics[width=\textwidth, trim={0 0 0 60}, clip]
        {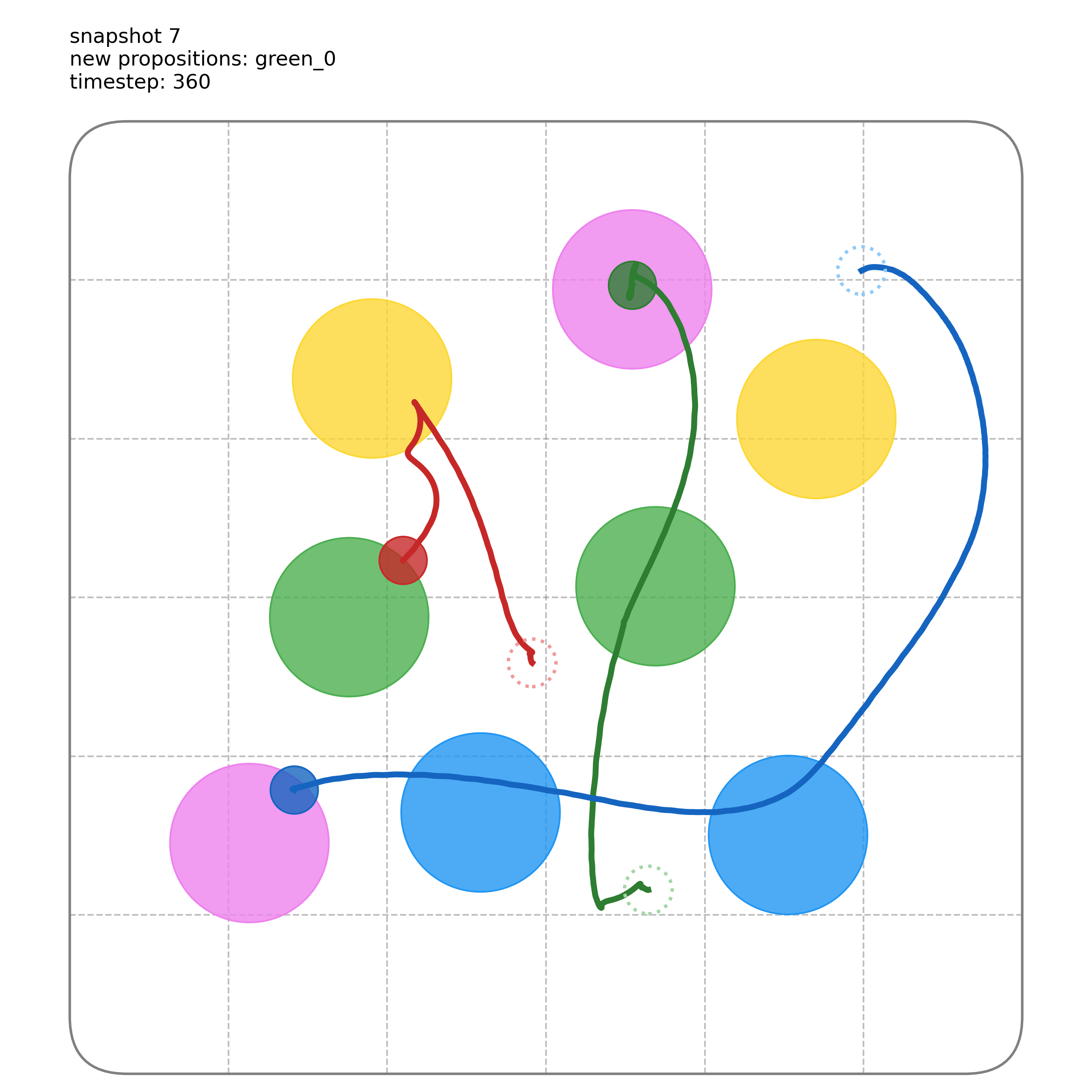}
        \caption{$t=358$\protect\\$\{\mathsf{green}_0,\mathsf{magenta}_1,\mathsf{magenta}_2\}$}
    \end{subfigure}
    \hspace{0.01\textwidth}
    \begin{subfigure}{0.23\textwidth}
        \centering
        \includegraphics[width=\textwidth, trim={0 0 0 60}, clip]
        {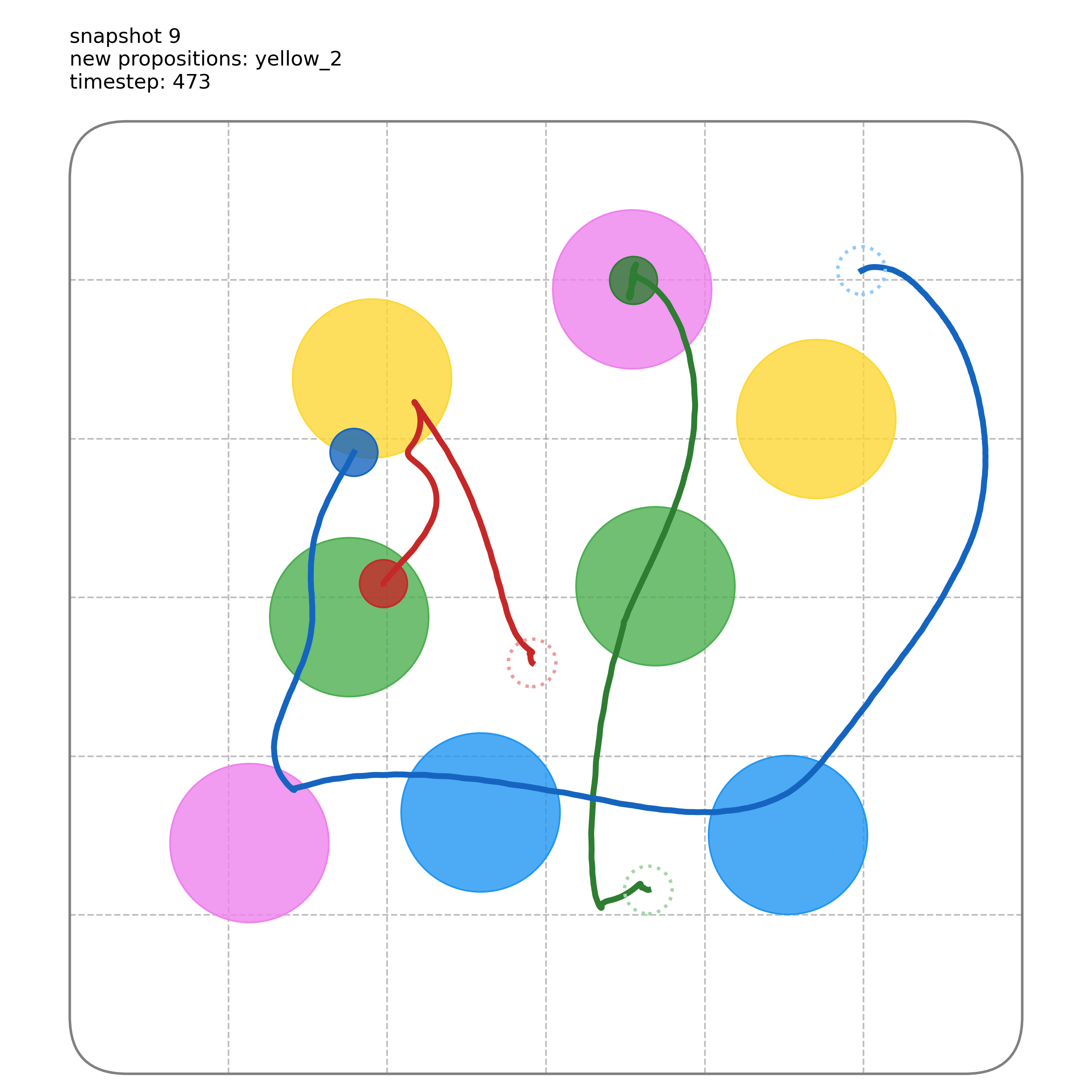}
        \caption{$t=471$\protect\\$\{\mathsf{green}_0,\mathsf{magenta}_1,\mathsf{yellow}_2\}$}
    \end{subfigure}

    \caption{Visualization of the trajectory of specification $\{\varphi_4,\varphi_5,\varphi_6,\varphi_{11},\varphi_{14}\}$. For each subplot, $t$ denotes the timestep, and the set denotes the propositions that are true at the current timestep.}
    \label{fig:qualitative-results}
\end{figure}

\begin{figure}[t]
    \centering
    \captionsetup[subfigure]{
        font=scriptsize,
        justification=centering,
        singlelinecheck=true,
        skip=1pt
    }

    \begin{subfigure}{0.96\textwidth}
        \centering
        \includegraphics[width=\linewidth]
        {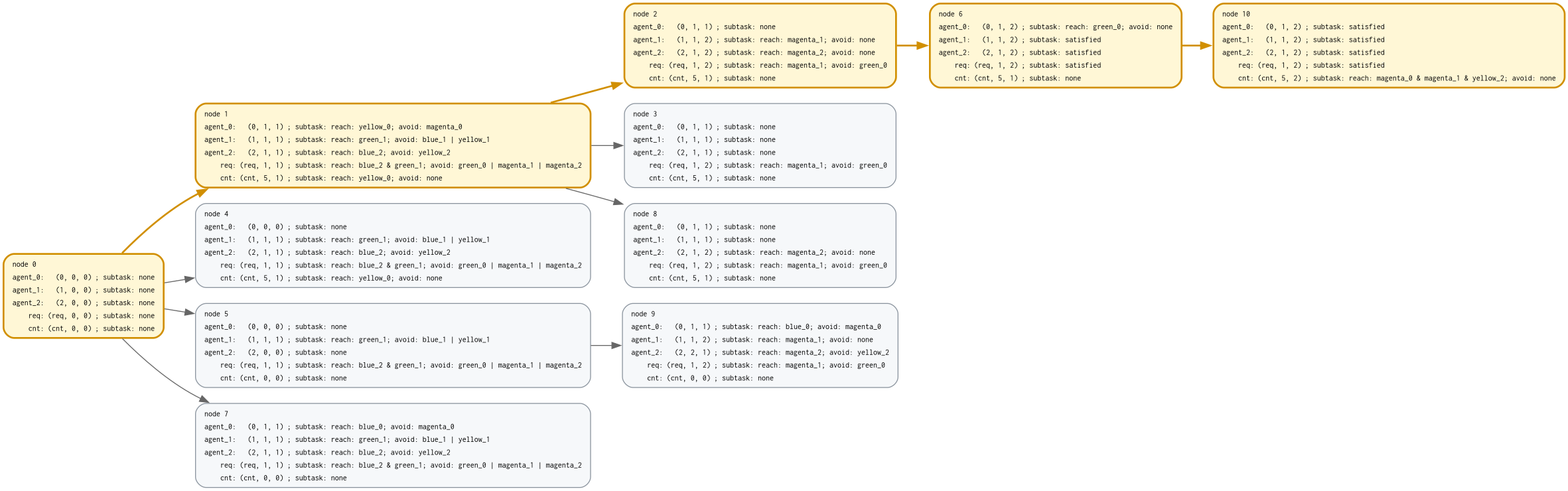}
        \caption{$t=0$}
    \end{subfigure}

    \vspace{2.5mm}

    \begin{subfigure}{0.96\textwidth}
        \centering
        \includegraphics[width=\linewidth]
        {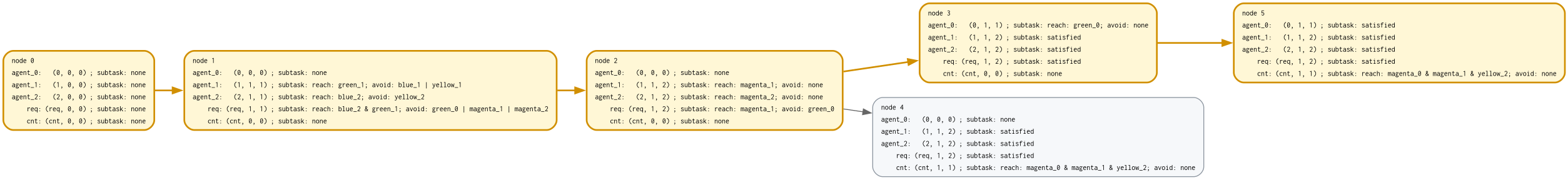}
        \caption{$t=102$}
    \end{subfigure}

    \vspace{2.5mm}

    \begin{subfigure}{0.96\textwidth}
        \centering
        \includegraphics[width=\linewidth]
        {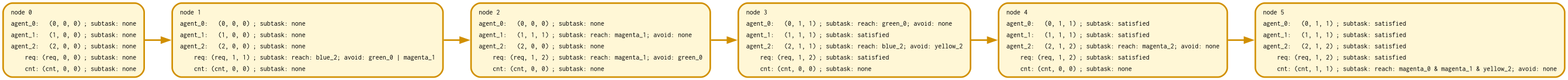}
        \caption{$t=126$}
    \end{subfigure}

    \vspace{2.5mm}

    \begin{subfigure}{0.96\textwidth}
        \centering
        \includegraphics[width=\linewidth]
        {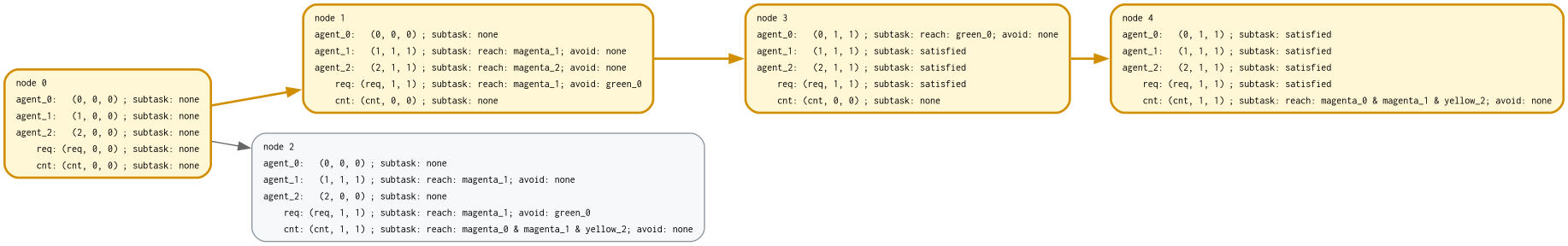}
        \caption{$t=198$}
    \end{subfigure}

    \vspace{2.5mm}

    \begin{subfigure}{0.80\textwidth}
        \centering
        \includegraphics[width=\linewidth]
        {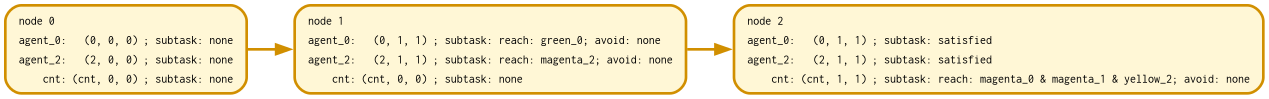}
        \caption{$t=289$}
    \end{subfigure}

    \vspace{3mm}

    \begin{subfigure}[t]{0.47\textwidth}
        \centering
        \includegraphics[width=\linewidth]
        {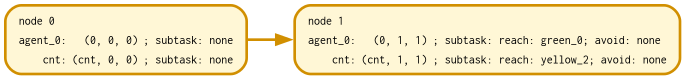}
        \caption{$t=349$}
    \end{subfigure}
    \hfill
    \begin{subfigure}[t]{0.47\textwidth}
        \centering
        \includegraphics[width=\linewidth]
        {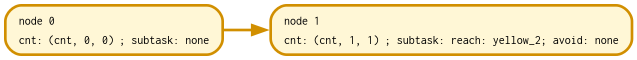}
        \caption{$t=358$}
    \end{subfigure}

    \caption{
    Visualization of the planning process of the high-level sampling-based planner for specification $\{\varphi_4,\varphi_5,\varphi_6,\varphi_{11},\varphi_{14}\}$.
    Each subplot shows the planning state at timestep $t$.
    Yellow blocks denote the selected subtask sequence, while gray blocks denote sampled nodes that are not part of the selected sequence.
    }
    \label{fig:rrt_planner}
\end{figure}

At the beginning of execution, the planner assigns $\sigma_0=(\{\mathsf{yellow}_0\},\{\mathsf{green}_0,\mathsf{magenta}_0\})$ (i.e., reach $\mathsf{yellow}_0$ and avoid $\mathsf{green}_0$ or $\mathsf{magenta}_0$), $\sigma_1=(\{\mathsf{green}_1\},\{\mathsf{blue}_1,\mathsf{magenta}_1,\mathsf{yellow}_1\})$, and $\sigma_2=(\{\mathsf{blue}_2\},\{\mathsf{magenta}_2,\mathsf{yellow}_2\})$.
After $\texttt{agent}_0$ reaches $\mathsf{yellow}$ at $t=102$, its individual task permits progression toward $\mathsf{green}$, but $\mathsf{green}_0$ remains avoided until the prerequisite $\mathsf{magenta}_1$ is satisfied; meanwhile, $\texttt{agent}_1$ and $\texttt{agent}_2$ continue their current subtasks.
At $t=126$, $\texttt{agent}_1$ reaches $\mathsf{green}$, after which $\mathsf{magenta}_1$ remains avoided until $\texttt{agent}_2$ reaches $\mathsf{blue}$, as required by the corresponding $\texttt{dep}$ constraint.
Once $\texttt{agent}_2$ reaches $\mathsf{blue}$ at $t=198$, the planner updates the subtasks to $\sigma_1=(\{\mathsf{magenta}_1\},\{\emptyset\})$ and $\sigma_2=(\{\mathsf{magenta}_2\},\{\emptyset\})$, while $\texttt{agent}_0$ continues to avoid $\mathsf{green}$.
After $\texttt{agent}_1$ first reaches $\mathsf{magenta}$ at $t=289$, the prerequisite for $\mathsf{green}_0$ is satisfied, and the planner assigns $\sigma_0=(\{\mathsf{green}_0\},\{\emptyset\})$ while retaining $\mathsf{magenta}_1$ and $\mathsf{magenta}_2$ as targets to satisfy the synchronous count requirement.
At $t=349$, $\texttt{agent}_1$ and $\texttt{agent}_2$ reach $\mathsf{magenta}$ simultaneously, after which the remaining subtasks are $\sigma_0=(\{\mathsf{green}_0\},\{\emptyset\})$ and $\sigma_2=(\{\mathsf{yellow}_2\},\{\emptyset\})$.
Finally, $\texttt{agent}_0$ reaches $\mathsf{green}$ at $t=358$ and $\texttt{agent}_2$ reaches $\mathsf{yellow}$ at $t=471$, and the specifications are satisfied.

\end{document}